\documentclass[11pt]{article}

\PassOptionsToPackage{numbers,sort&compress}{natbib}

%

%
%
%
\usepackage[abbrvbib, preprint]{jmlr2e}
\makeatletter
\renewcommand{\NAT@open}{[}
\renewcommand{\NAT@close}{]}
\makeatother

\makeatletter
\NAT@numberstrue
\makeatother

\let\cite\citep

\usepackage{blindtext}


\usepackage{lastpage}


\firstpageno{1}

\usepackage{graphicx} 
\usepackage{amsmath,amssymb,makecell,multicol}
\usepackage{booktabs}
\usepackage[letterpaper,top=1in,bottom=1in,left=1in,right=1in,marginparwidth=1.75cm]{geometry}
\usepackage{algorithm}
\usepackage{algpseudocode}
\usepackage{xcolor}
\usepackage{makecell}
\usepackage{array}
\usepackage{booktabs}
\usepackage{longtable}    
\usepackage{lscape}       
\usepackage{pdflscape}    
\usepackage{geometry}     

\usepackage{hyperref}
\usepackage{cleveref} 
\usepackage{float}                            
\usepackage[caption=false,font=small]{subfig} 

\newtheorem{thm}{Theorem}[section]
\newtheorem{lem}[thm]{Lemma}

\newtheorem{prop}[thm]{Proposition}
\newtheorem{assum}[thm]{Assumption}
\DeclareMathOperator{\diag}{diag}

\usepackage[parfill]{parskip}

\title{AdaGrad Meets Muon: Adaptive Stepsizes\\ for Orthogonal Updates}
\author{%
  \name Minxin Zhang \email minxinzhang@math.ucla.edu \\
  \name Yuxuan Liu \email yxliu@math.ucla.edu \\
  \name Hayden Schaeffer \email hayden@math.ucla.edu \\
  \addr Department of Mathematics \\
  University of California, Los Angeles \\
  Los Angeles, CA 90024, USA
}


\begin{document}
\newcommand{\Hess}{\nabla^2}  
\renewcommand{\Re}{\mathbb{R}} 
\newcommand{\dotP}[2]{\left\langle #1, #2 \right\rangle}
\newcommand{\tu}{\tilde{u}}
\newcommand{\tf}{\tilde{f}}
\newcommand{\abs}[1]{\left|#1\right|}
\newcommand{\norm}[1]{\left\|#1\right\|}
\newcommand{\normF}[1]{\norm{#1}_F}
\newcommand{\Set}[1]{\left\{#1\right\}}
\newcommand{\sgn}{\text{sgn}}
\newcommand{\trace}[1]{\operatorname{Tr}\left(#1\right)}
\renewcommand{\vec}[1]{\operatorname{vec}\left(#1\right)}
\newcommand{\argmin}{\operatorname*{argmin}}
\newcommand{\spa}[1]{\operatorname*{span}\left\{#1\right\}}
\newcommand{\bigO}{\mathcal O}
\renewcommand{\L}{\mathcal L}
\newcommand{\X}{\mathcal X}
\newcommand{\Z}{\mathcal Z}
\newcommand{\K}{\mathcal K}
\newcommand{\Zd}{{\scriptscriptstyle \mathcal Z}}
\newcommand{\floor}[1]{\left\lfloor #1 \right\rfloor}
\newcommand{\ceil}[1]{\left\lceil #1 \right\rceil}
\newcommand{\pr}[1]{\mathbb P\left( #1 \right)}
\newcommand{\expect}{\mathbb E}
\newcommand{\Var}[1]{\textrm{Var} \left[#1\right]}
\newcommand{\Cov}[1]{\textrm{Cov} \left[#1\right]}
\newcommand{\xstar}{x^\ast}
\newcommand{\Grad}{\nabla\!}
\newcommand{\comp}[1]{\left[#1 \right]}
\newcommand{\rank}[1]{\textrm{rank} \left(#1\right)}

\newcommand{\GD}{\textrm{\tiny GD}}
\newcommand{\OGD}{\textrm{\tiny OGD}}
\newcommand{\orth}[1]{\operatorname{Orth}\left(#1\right)}
\maketitle
\begin{abstract}%
The recently proposed Muon optimizer updates weight matrices via orthogonalized momentum and has demonstrated strong empirical success in large language model training. However, it remains unclear how to determine the learning rates for such orthogonalized updates. AdaGrad, by contrast, is a widely used adaptive method that scales stochastic gradients by accumulated past gradients. We propose a new algorithm, AdaGO, which combines a norm-based AdaGrad-type stepsize with an orthogonalized update direction, bringing together the benefits of both approaches. Unlike other adaptive variants of Muon, AdaGO preserves the orthogonality of the update direction, which can be interpreted as a spectral descent direction, while adapting the stepsizes to the optimization landscape by scaling the direction with accumulated past gradient norms. The implementation of AdaGO requires only minimal modification to Muon, with a single additional scalar variable, the accumulated squared gradient norms, to be computed, making it computationally and memory efficient. Optimal theoretical convergence rates are established for nonconvex functions in both stochastic and deterministic settings under standard smoothness and unbiased bounded-variance noise assumptions. Empirical results on CIFAR-10 classification and function regression demonstrate that AdaGO outperforms Muon and Adam.
\end{abstract}

\begin{keywords}%
 Muon optimizer, AdaGrad, orthogonalized momentum, adaptive stepsizes, nonconvex stochastic optimization, convergence rates
\end{keywords}

\section{Introduction}
The trainable parameters of neural networks, including those in large language models (LLMs), are often arranged as matrices. While widely used optimization algorithms such as stochastic gradient descent (SGD), Adam \cite{kingma2014adam}, and their variants treat these parameters as flattened vectors, the recently proposed Muon optimizer \cite{jordanmuon} explicitly leverages their matrix structure. By updating weight matrices with orthogonalized momentum, Muon has demonstrated superior empirical performance \cite{liu2025muon,shah2025practical}.
Nevertheless, a fundamental question remains unresolved: what constitutes an effective learning rate for Muon? More broadly, how should one determine effective learning rates for optimizers that employ orthogonal updates?

Given a matrix $M\in\Re^{m\times n}$, its orthogonalization is defined as \[
\orth{M}:=\argmin_{O\in\Re^{m\times n}}\Set{\normF{O-M}:OO^T=I_m \textrm{ or } O^TO=I_n},
\]
where $\normF{\cdot}$ denotes the Frobenius norm. Equivalently, if $M=U\Sigma V^T$ is the reduced singular value decomposition (SVD) of $M$,
then $\orth{M}=UV^T$ \cite[Proposition 4]{bernstein2024old}. Moreover,  
\begin{equation*}
\orth{M} = -\argmin_{Z} \left[\dotP{Z}{\frac{G}{\norm{G}_*}} + \frac{1}{2} \norm{Z}_2^2\right],
\end{equation*}
where $\dotP{Z}{G}=\trace{Z^T G}$, $\norm{\cdot}_*$ denotes the nuclear norm and $\norm{\cdot}_2$ denotes the spectral norm \cite[Proposition 5]{bernstein2024old}. 
Hence, orthogonalized gradient descent (OGD) can also be interpreted as the steepest descent under the spectral norm 
\cite{fan2025implicit, carlson2015preconditioned}.
The underlying algorithm of Muon is summarized in Algorithm~\ref{alg:muon}, with the orthogonalization in Line~\ref{line:orthoM} as 
the key step distinguishing it from other SGD-based methods.
Since computing the exact orthogonalization is expensive, Muon employs Newton–Schulz iterations to obtain an efficient approximation in practice \cite{modula-docs}, whereas theoretical analyses typically assume exact orthogonalization at each iteration \cite{shen2025convergence, sato2025analysis, li2025note, chen2025muon, kovalev2025understanding, pethick2025training}. Existing convergence results of Muon generally assume a small constant learning rate; in practice, however, considerable effort is devoted to tuning the 
learning rate or designing an appropriate learning-rate schedule, as is standard for SGD-based algorithms. Yet Muon fundamentally differs from these methods, 
as its orthogonalized updates significantly alter the optimization dynamics \cite{modula-docs},
making a reconsideration of learning rate selection for such updates important.
As a motivating example, we compares the standard gradient descent (GD) and OGD in training a one-layer linear neural network in Appendix~\ref{appx:linear} .
In general, empirical observations show that when the gradient norm is large at the start of 
training, employing a large learning rate in Muon produces a rapid initial decrease in training loss but soon leads to plateauing and oscillations. In contrast, a smaller constant learning rate results in slower convergence yet ultimately achieves a lower final loss. 
These observations suggest that an adaptively tuned learning rate schedule, informed by the gradients, has the potential to further improve the 
efficiency of orthogonal updates in the Muon optimizer.

One widely known adaptive SGD method, AdaGrad, adjusts learning rates based on the cumulative history of squared gradients. 
Originally proposed in \cite{duchi2011adaptive}, the full-matrix variant of AdaGrad 
scales the update direction using the full outer product of past gradients, whereas the more practical diagonal AdaGrad 
retains only the diagonal entries of this matrix.
A more recent variant, AdaGrad-Norm \cite{ward2020adagrad}, scales the learning rate by the square root of the accumulated gradient norms.
Whereas full-matrix and diagonal AdaGrad adaptively rescale the learning rate for each parameter and thus alter the update direction, AdaGrad-Norm adjusts the stepsize through a single scalar factor while preserving the original stochastic gradient direction.
These AdaGrad stepsizes have been extensively studied in the context of standard stochastic gradients 
\cite{streeter2010less, duchi2011adaptive, faw2022power, ward2020adagrad, li2019convergence, 
kovalev2025sgd, defossez2020simple, zhou2018convergence}. However, 
their behavior under modified update directions, such as those obtained through orthogonalization, remains largely unexplored.
For instance, under the standard bounded-variance noise assumption, while the expected distance between the stochastic and true gradients is bounded, the distance between their orthogonalized counterparts can be much larger when the gradient matrices are ill-conditioned \cite{higham1986computing,zhang2025sharp}. 
Thus, orthogonalization may amplify the effect of noise in stochastic gradients, making simply accumulating gradient norms as in AdaGrad-Norm inadequate 
in this setting. In this work, we address this gap by introducing adaptive stepsizes for the orthogonalized directions. 
Motivated by the strong empirical performance of orthogonalized momentum in the Muon optimizer, we introduce a learning rate schedule that adapts to past gradients while preserving orthogonality in the updates. Specifically, we present a new algorithm, AdaGO, which combines norm-based AdaGrad stepsizes with 
orthogonalized update directions, and establish theoretical convergence guarantees for nonconvex functions.

\begin{algorithm}[!htp]
\caption{Muon}\label{alg:muon}
\begin{algorithmic}[1]
\Require Learning rate $\eta>0$, momentum $\mu\in [0,1)$, batch size $\Set{b_t}$
\State Initialize $\Theta_0\in\Re^{m\times n}$, $M_{0} = 0$
\For{$t = 1, 2, \dots, T$}
    \State Sample a minibatch of size $b_t$ and compute stochastic gradient $G_{t} = \nabla_{\Theta} \mathcal{L}_{t}(\Theta_{t-1})$
    \State $M_{t} \gets \mu M_{t-1} + (1-\mu)G_{t}$
    \State $O_{t} \gets \orth{M_{t}}$\label{line:orthoM}
    \State Update parameters $\Theta_{t} \gets\Theta_{t-1} - \eta O_t $
\EndFor
\State \Return $\Theta_{T}$
\end{algorithmic}
\end{algorithm}

\subsection{Related work}
The convergence of Muon has been analyzed in \cite{li2025note, shen2025convergence, kovalev2025understanding, pethick2025training}, 
which show that Muon converges to a stationary point at a rate of $\mathcal{O}(T^{-1/4})$ when using a constant stepsize of magnitude $\mathcal{O}(T^{-3/4})$, where 
$T$ denotes total number of iterations.
The analysis in \cite{sato2025analysis} covers four practical Muon variants, with and without Nesterov momentum and with and without weight decay, 
and derives the critical batch size.
The analysis in \cite{chen2025muon} interprets Muon as solving a spectral norm constrained problem within the Lion-K framework and establishes convergence to KKT points at a rate that depends on the batch size.

Several adaptive variants of Muon have been proposed. 
AdaMuon \cite{si2025adamuon} combines Muon with element-wise adaptivity, showing empirical improvements without theoretical convergence guarantees. COSMOS \cite{liu2025cosmos} combines SOAP \cite{vyas2024soap} and Muon for memory-efficient LLM training, reporting practical benefits in stability and memory usage, but also lacks convergence guarantees. Shampoo \cite{gupta2018shampoo} precedes Muon and is equivalent to it when momentum and accumulation are omitted.
ASGO \cite{an2025asgo} introduces an adaptive one-sided preconditioner, equivalent to Muon when momentum and accumulation are omitted. PolarGrad \cite{lau2025polargrad} unifies matrix-aware preconditioned optimizers and proposes polar-decomposition updates that subsume Muon. 
For Shampoo, ASGO, and PolarGrad, theoretical convergence has been established in convex settings.

AdaGrad stepsizes are first introduced in \cite{streeter2010less, duchi2011adaptive}. In \cite{streeter2010less}, they are shown to 
yield regret bounds typically tighter than those obtained with a fixed stepsize in online convex optimization, while \cite{duchi2011adaptive} 
extends them to stochastic optimization settings.
Unified analyses of adaptive SGD methods in convex settings are provided in \cite{kovalev2025sgd, xie2025structured}, encompassing AdaGrad, Shampoo, 
and ASGO, though not extending to Muon.
For nonconvex optimization, \cite{ward2020adagrad} establishes the convergence rate of AdaGrad-Norm stepsizes applied to stochastic gradients to 
stationary points, under the restrictive assumption that gradient norms are uniformly bounded.
Subsequently, \cite{faw2022power} relaxes this assumption, albeit at the cost of introducing higher-order polylogarithmic factors in the rate, 
and more recently, \cite{wang2023convergence} refines the analysis to recover bounds comparable to \cite{ward2020adagrad} under weaker assumptions.
High probability bounds on the convergence rates of AdaGrad for SGD are established in \cite{zhou2018convergence}, 
and \cite{defossez2020simple} analyzes its variant with momentum.
A generalized version of AdaGrad for SGD is analyzed in both convex and nonconvex settings in \cite{li2019convergence}. Moreover, 
AdaGrad-Norm has been shown to adapt to the noise level of stochastic gradients \cite{li2019convergence} and 
to the H\"older smoothness of the objective function \cite{orabona2023normalized}.

\subsection{Contributions and organization}
We propose a new algorithm, AdaGO, which combines a norm-based AdaGrad-type stepsize with an orthogonalized update direction, bringing together the benefits of Muon and AdaGrad. Unlike other adaptive variants of Muon, AdaGO preserves the orthogonality of the update direction, while adapts the stepsizes to the optimization landscape. The implementation of AdaGO requires minimal modification to Muon, with a single additional scalar variable, the accumulated squared gradient norms, to be computed, making it computationally and memory efficient.  Optimal theoretical convergence rates are established for nonconvex functions in both stochastic and deterministic settings under standard assumptions. Empirical results on CIFAR-10 classification and function regression demostate that AdaGO outperforms Muon and Adam. The rest of paper is organized as follows. We introduce the new algorithm in Section~\ref{sec:algo}, and present the theoretical analysis in
Section~\ref{sec:converge}, with proofs deferred to Appendices~\ref{appendix:sm}--\ref{appendix:sg}. 
Experimental results are reported in Section~\ref{sec:experiments}, and Section~\ref{sec:conclude} concludes with a discussion of future directions.
\section{AdaGO: A New Algorithm}\label{sec:algo}
In this section, we present the new algorithm, AdaGO, combining stepsizes adaptively tuned by past gradients with orthogonalized updates. 
The details of AdaGO are summarized in Algorithm~\ref{alg:adaGO}.

\begin{algorithm}[H]
\caption{AdaGO}\label{alg:adaGO}
\begin{algorithmic}[1]
\Require Learning rate $\eta>0$, momentum $\mu\in [0,1)$, batch size $\Set{b_t}$, $\gamma>0$, $\epsilon>0$
\State Initialize $\Theta_0\in\Re^{m\times n}$, $M_{0} = 0$, $v_0>0$
\For{$t = 1, 2, \dots, T$}
    \State Sample a minibatch of size $b_t$ and compute stochastic gradient $G_{t} = \nabla \mathcal{L}_{t}(\Theta_{t-1})$
    \State $M_{t} \gets \mu M_{t-1} + (1-\mu)G_{t}$
    \State $v_{t}^2 \gets v_{t-1}^2+\min\{\norm{G_t}^2,\gamma^2\}$\label{line:accumG}
    \State $O_{t} \gets \orth{M_{t}}$
    \State Update parameters $\Theta_{t} \gets \Theta_{t-1} \;-\; \max\{\epsilon,\eta\frac{\min\{\norm{G_t},\gamma\}}{v_{t}}\}\,O_{t} $\label{line:update}
\EndFor
\State \Return $\Theta_{T}$
\end{algorithmic}
\end{algorithm}

At each iteration, AdaGO updates the training parameters by \[
\Theta_t = \Theta_{t-1} -\alpha_t O_{t}, \quad\textrm{ with }~ \alpha_t:= \max\left\{\epsilon,\eta\frac{\min\{\norm{G_t},\gamma\}}{v_{t}}\right\},
\]
where $O_t$ is the orthogonalized momentum and $\alpha_t$ is an adaptive stepsize.
Recall that AdaGrad-Norm accumulates the squared norms of past gradients and scales the stochastic gradient by the reciprocal of the square root of this accumulation, with its convergence rate established under the relatively restrictive assumption that the gradient norms are uniformly bounded \cite{ward2020adagrad}.
For AdaGO, we remove this assumption and instead accumulate the squared norms clamped by a large constant $\gamma > 0$ to obtain $$
v_t^2 = \sum_{\tau=0}^t \min\{\norm{G_\tau}^2,\gamma^2\}.$$ Empirically, AdaGO performs robustly across a wide range of $\gamma$ values. To prevent numerical instability from division by small denominators in the stepsize computation, we initialize the accumulator $v_t$ with $v_0>0$. Theoretically, Section~\ref{sec:converge} shows that $\gamma$ and $v_0$ appear only in logarithmic terms in the convergence error bounds, and thus have limited impact on performance. Moreover, since the orthogonalized momentum has unit magnitude, we scale $O_t$ by the clamped current gradient norm, i.e., $\min\{\norm{G_t},\gamma\}$. This ensures that the per-iteration update decays to zero as AdaGO converges to a stationary point—a property known 
as \emph{null gradient consistency}, which is generally desirable for optimization algorithms \cite{lau2025polargrad}.
In addition, we impose a lower bound $\epsilon>0$ on the stepsizes, thereby ensuring that AdaGO converges at least as fast as 
Muon with a small constant stepsize. As shown in the analysis in Section~\ref{sec:converge}, the choice of $\epsilon$ depends
on the optimization stopping time $T.$ The theoretical results in the following section hold for any choice of matrix norm for the gradients. 
In practice, however, we use the Frobenius norm of $G_t$ in Lines~\ref{line:accumG} and \ref{line:update} of Algorithm~\ref{alg:adaGO} for computational efficiency.

\section{Convergence Analysis}\label{sec:converge}
For the convergence analysis of AdaGO, we impose the standard assumptions that the loss function $\L(\Theta)$ is $L$-smooth and that the stochastic gradient is an unbiased estimator of the true gradient with bounded variance.
\begin{assum}\label{assum:func}
The gradient of $\L(\Theta)$ is Lipschitz continuous, i.e., for arbitrary $\Theta, \Theta'\in\Re^{m\times n},$
\begin{equation}\label{eq:lipG}
\norm{\Grad\L(\Theta)-\Grad\L(\Theta')}_*\le L\norm{\Theta-\Theta'}_2
\end{equation}
for some constant $L>0,$ where $\norm{\cdot}_*$ and $\norm{\cdot}_2$ denote the nuclear norm and the spectral norm respectively.
\end{assum}
\begin{assum}\label{assum:noise}
At each iteration $t,$ the stochastic gradient $G_t$ is an unbiased estimate of the true gradient, i.e.,
$\expect[G_t] = \Grad\L(\Theta_{t-1}),$
with a uniformly bounded variance
\[\expect\left[\normF{G_t-\Grad\L(\Theta_{t-1})}^2\right]\le \frac{\kappa^2}{b_t},\]
where $b_t\ge 1$ is the batch size and $\normF{\cdot}$ denotes the Frobenius norm.
\end{assum}
Note that Assumption~\ref{assum:func} is equivalent to a more commonly used assumption:
\begin{equation}\label{eq:lipGF}
\normF{\Grad\L(\Theta)-\Grad\L(\Theta')}\le L'\normF{\Theta-\Theta'}
\end{equation}
for a different Lipschitz constant $L'>0$. Since OGD is interpreted as the steepest descent under the spectral norm,
we assume Eq. \eqref{eq:lipG} for the analysis of AdaGO. A detailed discussion on the two equivalent assumptions are 
presented in \cite{shen2025convergence}.

The convergence of AdaGO is established in the following theorem, with the proof provided in Appendix~\ref{appendix:sm}. 
\begin{thm}\label{thm:sm}
Suppose Assumptions~\ref{assum:func}--\ref{assum:noise} holds. Let $\Set{\Theta_t}\subset\Re^{m\times n}$ be the sequence of iterates generated by
Algorithm~\ref{alg:adaGO} and write $\Delta:=\L(\Theta_0)-\min_{\Theta}\L(\Theta)$ and $r:=\min\{m,n\}.$ If we set $b_t\equiv1$, $\epsilon=T^{-\frac{3}{4}}$,
$1-\mu=T^{-\frac{1}{2}}$, and $\eta= T^{-\left(\frac{3}{8}+q\right)}$ for arbitrary $q>0$, then, for large $T,$ \[
\frac{1}{T}\sum_{t=1}^T \expect\left[\norm{\Grad\L(\Theta_{t-1})}_*\right]\\
\le  \bigO\left(\frac{\Delta+\kappa\sqrt{r}+L}{T^{\frac{1}{4}}}+\frac{L\sqrt{r}}{T^{\frac{1}{4}+q}}\left(\ln\left(\frac{\gamma^2}{v_0^2}T\right)+1\right)\right).
\]
\end{thm}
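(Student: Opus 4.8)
The plan is to couple the spectral-norm descent lemma with the standard orthogonalized-momentum error decomposition, and to feed in an AdaGrad-type logarithmic telescoping to control the adaptive stepsize sums. Write $\alpha_t:=\max\{\epsilon,\eta\min\{\norm{G_t},\gamma\}/v_t\}$ for the effective stepsize, $\phi_t:=\Grad\L(\Theta_{t-1})$, and $e_t:=M_t-\phi_t$. Since $O_t:=\orth{M_t}$ has $\norm{O_t}_2=1$, Assumption~\ref{assum:func} gives $\L(\Theta_t)\le\L(\Theta_{t-1})-\alpha_t\dotP{\phi_t}{O_t}+\frac{L}{2}\alpha_t^2$. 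Because $O_t$ is the minimizer in the definition of $\orth{\cdot}$, $\dotP{M_t}{O_t}=\norm{M_t}_*$ and $\dotP{X}{O_t}\le\norm{X}_*$ for every $X$ (nuclear/spectral duality), so $\dotP{\phi_t}{O_t}\ge\norm{M_t}_*-\norm{e_t}_*\ge\norm{\phi_t}_*-2\norm{e_t}_*$. Summing over $t=1,\dots,T$, taking expectations, and using the deterministic bound $\alpha_t\ge\epsilon$ on the nonnegative term $\norm{\phi_t}_*$ gives
\[
\epsilon\sum_{t=1}^T\expect\norm{\phi_t}_*\;\le\;\Delta+\frac{L}{2}\sum_{t=1}^T\expect[\alpha_t^2]+2\sum_{t=1}^T\expect\!\big[\alpha_t\norm{e_t}_*\big];
\]
dividing by $T\epsilon$, with $1/(T\epsilon)=T^{-1/4}$ for $\epsilon=T^{-3/4}$, already yields the stated left-hand side and the $\Delta/T^{1/4}$ term, so it remains to bound the two sums on the right.

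For the stepsize, set $a_t:=\min\{\norm{G_t}^2,\gamma^2\}$, so $v_t^2=v_0^2+\sum_{\tau\le t}a_\tau$; the clamp makes the elementary inequality $\sum_{t=1}^T a_t/v_t^2\le\ln(v_T^2/v_0^2)\le\ln(1+T\gamma^2/v_0^2)$ hold deterministically, and with $\alpha_t\le\epsilon+\eta\min\{\norm{G_t},\gamma\}/v_t$ this gives the deterministic bounds $\sum_t\alpha_t^2\le 2T\epsilon^2+2\eta^2\ln(1+T\gamma^2/v_0^2)$ and, by Cauchy--Schwarz, $\sum_t\alpha_t\le T\epsilon+\eta\sqrt{T\ln(1+T\gamma^2/v_0^2)}$. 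For the momentum error I would split $e_t=n_t+p_t$ with $n_t=(1-\mu)\sum_{s\le t}\mu^{t-s}(G_s-\phi_s)$; the increments $G_s-\phi_s$ are conditionally centered, so the cross terms vanish and $\expect\normF{n_t}^2\le(1-\mu)^2\kappa^2/(1-\mu^2)\le(1-\mu)\kappa^2$, whence $\expect\norm{n_t}_*\le\sqrt{r(1-\mu)}\,\kappa$ --- this is the origin of $\kappa\sqrt r$ and of the factor $(1-\mu)^{1/2}=T^{-1/4}$. The deterministic remainder obeys $\norm{p_t}_*\le\mu L\sum_{2\le s\le t}\mu^{t-s}\alpha_{s-1}+\mu^t\norm{\phi_1}_*$, using $\norm{\phi_s-\phi_{s-1}}_*\le L\norm{\Theta_{s-1}-\Theta_{s-2}}_2=L\alpha_{s-1}$ from Assumption~\ref{assum:func}; the first summand is handled directly in the nuclear norm, avoiding any $\sqrt r$ (this is why the leading $L$-term carries no $\sqrt r$), while $\norm{\phi_1}_*=\norm{\Grad\L(\Theta_0)}_*$ in the second summand is absorbed as a lower-order constant via $\norm{\phi_1}_*^2\le 2rL\Delta$.

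It remains to assemble $2\sum_t\expect[\alpha_t\norm{e_t}_*]$. Splitting $\alpha_t\le\epsilon+\eta\min\{\norm{G_t},\gamma\}/v_t$: the $\epsilon$-piece paired with $\norm{n_t}_*$ gives $2\epsilon T\sqrt{r(1-\mu)}\,\kappa$, which for the stated parameters is $\bigO(\kappa\sqrt r\,T^{-1/4})$ after dividing by $T\epsilon$; the $\epsilon$-piece paired with $\norm{p_t}_*$ produces the double sum $\sum_t\alpha_t\sum_{s\le t}\mu^{t-s}\alpha_{s-1}\le(1-\mu)^{-1}\sum_t\alpha_t^2$ (Young/AM--GM), whose leading part $\bigO\big(\mu L(1-\mu)^{-1}T\epsilon^2\big)$ becomes $\bigO\big(\mu L\epsilon/(1-\mu)\big)=\bigO(L\,T^{-1/4})$ after division --- this is the source of the $L/T^{1/4}$ term, and it is what forces $\epsilon/(1-\mu)\asymp T^{-1/4}$ together with $\epsilon\asymp T^{-3/4}$, hence $1-\mu=T^{-1/2}$. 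All remaining pieces --- the $\eta$-part of $\alpha_t$ against the noise (bounded by Cauchy--Schwarz over $t$ from $\sum_t\expect\norm{n_t}_*^2\le rT(1-\mu)\kappa^2$ and the log-bound), the $\eta^2\ln$ term of $\sum_t\alpha_t^2$, the bias term $\mu^t\norm{\phi_1}_*$ (through $\sum_t\alpha_t\mu^t$ and Cauchy--Schwarz, where $\norm{\phi_1}_*^2\le 2rL\Delta$ injects the $\sqrt r$), and $\tfrac{L}{2}\sum_t\alpha_t^2$ --- are $o(T^{-1/4})$ and, for $\eta=T^{-(3/8+q)}$ and after $\ln(1+x)\le\ln x+1$, collect up to constants into the displayed $L\sqrt r\,T^{-(1/4+q)}(\ln(\gamma^2T/v_0^2)+1)$; the condition $q>0$ is precisely what makes these residuals strictly lower order. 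I expect the main obstacle to be the statistical coupling between $\alpha_t$ and the current stochastic gradient (unlike AdaGrad-Norm with a $v_{t-1}$ stepsize, here $v_t$ and $\norm{G_t}$ both depend on the current noise): the resolution is to never condition on $\alpha_t$, using only the deterministic sandwich $\epsilon\le\alpha_t\le\epsilon+\eta\min\{\norm{G_t},\gamma\}/v_t$ and the deterministic log-telescoping, and to push all remaining randomness into $\expect\normF{n_t}^2$ --- which also forces keeping the momentum variance reduction at the level of second moments, since a first-moment recursion for $e_t$ would lose the $(1-\mu)^{1/2}$ factor and thus the rate.
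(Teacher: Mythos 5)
Your proposal is correct and follows essentially the same route as the paper's proof: the spectral-norm descent lemma with the duality bound $\dotP{\Grad\L(\Theta_{t-1})}{O_t}\ge\norm{\Grad\L(\Theta_{t-1})}_*-2\norm{M_t-\Grad\L(\Theta_{t-1})}_*$, lower-bounding the effective stepsize by $\epsilon$, the AdaGrad-type logarithmic telescoping for the clamped ratios, and the Cutkosky--Mehta momentum-error recursion whose martingale part yields the $\kappa\sqrt{r(1-\mu)}$ term and whose Lipschitz-drift part yields the $\mu L\epsilon/(1-\mu)$ term, with identical parameter bookkeeping. Your splitting of the error into noise and drift components paired with the $\epsilon$- and $\eta$-parts of the stepsize is only a mild reorganization of the paper's separate first- and second-moment bounds on $E_t$ (and you are in fact slightly more careful with the initial bias $\mu^t\norm{\Grad\L(\Theta_0)}_*$, which the paper silently drops; just note it stays lower order only if you split off the $\epsilon$-part before Cauchy--Schwarz or use $\norm{\Grad\L(\Theta_0)}_*^2\le 2L\Delta$).
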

By \cite[Theorem 3]{arjevani2023lower},
the $\mathcal{O}(T^{-1/4})$ rate established above is the best possible convergence rate for stochastic first-order methods under Assumptions~\ref{assum:func}--\ref{assum:noise}.

We also establish the convergence of AdaGO in the deterministic setting without momentum in the following theorem, with the proof given in Appendix~\ref{appendix:determin}. 
\begin{thm}\label{thm:determin}
Suppose Assumptions~\ref{assum:func}--\ref{assum:noise} holds. Let $\Set{\Theta_t}\subset\Re^{m\times n}$ be the sequence of iterates generated by
Algorithm~\ref{alg:adaGO} using full batch with $\mu=0$.
Write $\Delta:=\L(\Theta_0)-\min_{\Theta}\L(\Theta)$ and $r:=\min\{m,n\}.$ 
If $\epsilon = T^{-\frac{1}{2}}$ and $\eta=T^{-q}$ for arbitrary $q>0,$ then, for large $T$,
\[
\frac{1}{T}\sum_{t=1}^T \expect\left[\norm{\Grad\L(\Theta_{t-1})}_*\right]\le \bigO\left(\frac{\Delta+L}{\sqrt{T}}\right).
\]
\end{thm}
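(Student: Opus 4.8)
The plan is to run a straightforward descent-lemma argument, exploiting that the smoothness hypothesis is stated in the spectral/nuclear duality that matches the orthogonalization step exactly. With $\mu=0$ and full batches, $G_t=\Grad\L(\Theta_{t-1})$ and $O_t=\orth{G_t}$, so the update reads $\Theta_t=\Theta_{t-1}-\alpha_t O_t$ with $\alpha_t=\max\{\epsilon,\eta\min\{\norm{G_t}{},\gamma\}/v_t\}$ and $v_t^2=v_0^2+\sum_{\tau=1}^t\min\{\norm{G_\tau}{}^2,\gamma^2\}$. The descent inequality induced by Assumption~\ref{assum:func} (smoothness with respect to $\norm{\cdot}_2$, whose dual is $\norm{\cdot}_*$) gives
\[
\L(\Theta_t)\le \L(\Theta_{t-1})-\alpha_t\dotP{\Grad\L(\Theta_{t-1})}{O_t}+\tfrac{L}{2}\alpha_t^2\norm{O_t}_2^2 .
\]
Two elementary facts about orthogonalization close the per-step bound: writing the reduced SVD $G_t=U\Sigma V^T$, one has $\dotP{G_t}{\orth{G_t}}=\trace{V\Sigma V^T}=\norm{G_t}_*$, and $\norm{\orth{G_t}}_2\le 1$. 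Hence $\L(\Theta_t)\le\L(\Theta_{t-1})-\alpha_t\norm{\Grad\L(\Theta_{t-1})}_*+\tfrac{L}{2}\alpha_t^2$.

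Summing over $t=1,\dots,T$ and telescoping the left side against $\Delta=\L(\Theta_0)-\min_\Theta\L(\Theta)$ yields $\sum_{t=1}^T\alpha_t\norm{\Grad\L(\Theta_{t-1})}_*\le\Delta+\tfrac{L}{2}\sum_{t=1}^T\alpha_t^2$. For the left-hand side I would only use the floor $\alpha_t\ge\epsilon$, so that $\epsilon\sum_{t=1}^T\norm{\Grad\L(\Theta_{t-1})}_*\le\Delta+\tfrac{L}{2}\sum_{t=1}^T\alpha_t^2$; the point is that in the deterministic regime the $\epsilon$-floor alone already guarantees progress (this is why AdaGO is ``at least as fast as Muon with a small constant stepsize''), and the particular matrix norm used inside $\alpha_t$ is irrelevant to this step.

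It remains to bound $\sum_t\alpha_t^2$. Using $\max\{a,b\}\le a+b$ for nonnegative reals, $\alpha_t^2\le 2\epsilon^2+2\eta^2\min\{\norm{G_t}{}^2,\gamma^2\}/v_t^2=2\epsilon^2+2\eta^2(v_t^2-v_{t-1}^2)/v_t^2$. The clamping by $\gamma$ in Line~\ref{line:accumG} is precisely what makes the second sum finite without assuming bounded gradients: by the inequality $(x-y)/x\le\ln(x/y)$ for $0<y\le x$, and since $v_t\ge v_0>0$, $\sum_{t=1}^T(v_t^2-v_{t-1}^2)/v_t^2\le\ln(v_T^2/v_0^2)\le\ln(1+\gamma^2T/v_0^2)$. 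Therefore $\sum_{t=1}^T\alpha_t^2\le 2T\epsilon^2+2\eta^2\ln(1+\gamma^2T/v_0^2)$.

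Substituting $\epsilon=T^{-1/2}$ makes $T\epsilon^2=1$, while $\eta=T^{-q}$ makes $\eta^2\ln(1+\gamma^2T/v_0^2)=\bigO(T^{-2q}\ln T)\to0$, so $\sum_t\alpha_t^2=\bigO(1)$ once $T$ is large (the threshold depending on $q$). Plugging back, $\epsilon\sum_{t=1}^T\norm{\Grad\L(\Theta_{t-1})}_*\le\Delta+\bigO(L)$, and dividing by $\epsilon T=\sqrt T$ yields $\frac1T\sum_{t=1}^T\norm{\Grad\L(\Theta_{t-1})}_*\le\bigO\big((\Delta+L)/\sqrt T\big)$; note the logarithmic factor appearing in Theorem~\ref{thm:sm} is absorbed here because $\eta$ decays. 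I expect no real obstacle in this deterministic case: the only points requiring care are using the descent lemma in the spectral/nuclear pairing rather than the Frobenius one and the identity $\dotP{G}{\orth{G}}=\norm{G}_*$; all of the noise-amplification and momentum-bias difficulties that complicate the stochastic proof are simply absent.
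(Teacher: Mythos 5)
Your proposal is correct and follows essentially the same route as the paper's proof in Appendix~\ref{appendix:determin}: the descent lemma in the spectral/nuclear pairing with $\dotP{G_t}{O_t}=\norm{G_t}_*$, the $\epsilon$-floor to lower-bound the stepsize on the left-hand side, the AdaGrad-style logarithmic bound on $\sum_t \alpha_t^2$ (your $(x-y)/x\le\ln(x/y)$ argument is exactly the content of Lemma~\ref{lem:as}), and then the substitution $\epsilon=T^{-1/2}$, $\eta=T^{-q}$. The only cosmetic difference is that you bound $\max\{\epsilon,\eta\alpha_t\}^2$ by $2\epsilon^2+2\eta^2\alpha_t^2$ rather than $\epsilon^2+\eta^2\alpha_t^2$, which only changes constants.
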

As shown in \cite[Theorem 2]{carmon2020lower}, the $\bigO(1/\sqrt{T})$ rate established above is the best possible 
convergence rate for deterministic first-order methods under Assumption~\ref{assum:func}.

Of theoretical interest, we further analyze the behavior of AdaGO in the stochastic setting when momentum is turned off. 
The following theorem shows that Algorithm~\ref{alg:adaGO} without momentum converges if the batch size $b_t$ increases as $t$ increases,
with the proof given in Appendix~\ref{appendix:sg}. 
\begin{thm}\label{thm:sg}
Suppose Assumptions~\ref{assum:func}--\ref{assum:noise} holds. Let $\Set{\Theta_t}\subset\Re^{m\times n}$ be the sequence of iterates generated by
Algorithm~\ref{alg:adaGO} with the momentum $\mu=0$.
If we set the batch size $b_t=\sqrt{t}$, $\epsilon = T^{-\frac{1}{2}}$ and $\eta=T^{-q}$ for 
arbitrary $q>0,$ then for large $T,$
\[
\frac{1}{T}\sum_{t=1}^T\expect\left[\norm{\Grad\L(\Theta_{t-1})}_*\right]
\le \bigO\left(\frac{\kappa\sqrt{r}}{T^{\frac{1}{4}}}+\frac{L+\Delta}{\sqrt{T}}\right).
\]
Or, if $b_t=t$, $\epsilon = T^{-\frac{1}{2}}$ and $\eta=T^{-q}$ for 
arbitrary $q>0,$ then
\[
\frac{1}{T}\sum_{t=1}^T\expect\left[\norm{\Grad\L(\Theta_{t-1})}_*\right]
\le \bigO\left(\frac{\kappa\sqrt{r}+L+\Delta}{\sqrt{T}}\right).
\]
\end{thm}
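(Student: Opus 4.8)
The plan is to run an $L$-smoothness descent argument in the spectral/nuclear geometry of Assumption~\ref{assum:func}, using the algebraic identity $\dotP{G_t}{\orth{G_t}}=\norm{G_t}_*$ to lower bound the per-step decrease by the nuclear norm of the true gradient (minus a noise correction), and then absorbing the resulting noise contributions into logarithmic factors by the AdaGrad-Norm telescoping inequality. Throughout, $\mu=0$, so $M_t=G_t$ and $O_t=\orth{G_t}$ with $\norm{O_t}_2\le1$; write $\xi_t:=G_t-\Grad\L(\Theta_{t-1})$ and $g_t:=\min\{\norm{G_t},\gamma\}$.

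I would first establish a one-sided inner-product estimate. From $\orth{M}=UV^T$ for $M=U\Sigma V^T$ one gets $\dotP{G_t}{O_t}=\norm{G_t}_*$; combining this with the Schatten--H\"older bound $\abs{\dotP{\xi_t}{O_t}}\le\norm{\xi_t}_*\norm{O_t}_2\le\norm{\xi_t}_*$ and the triangle inequality $\norm{G_t}_*\ge\norm{\Grad\L(\Theta_{t-1})}_*-\norm{\xi_t}_*$ gives $\dotP{\Grad\L(\Theta_{t-1})}{O_t}\ge\norm{\Grad\L(\Theta_{t-1})}_*-2\norm{\xi_t}_*$. Assumption~\ref{assum:func} yields the descent inequality $\L(\Theta_t)\le\L(\Theta_{t-1})-\alpha_t\dotP{\Grad\L(\Theta_{t-1})}{O_t}+\tfrac L2\alpha_t^2\norm{O_t}_2^2$ (integrate $\Grad\L$ along the segment from $\Theta_{t-1}$ to $\Theta_t=\Theta_{t-1}-\alpha_tO_t$, noting $\norm{\Theta_t-\Theta_{t-1}}_2=\alpha_t\norm{O_t}_2$). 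Substituting the previous estimate and using $\alpha_t\ge\epsilon$, $\norm{O_t}_2\le1$ gives
\[
\epsilon\,\norm{\Grad\L(\Theta_{t-1})}_*\ \le\ \L(\Theta_{t-1})-\L(\Theta_t)+2\alpha_t\norm{\xi_t}_*+\tfrac L2\alpha_t^2 .
\]
Summing over $t=1,\dots,T$, telescoping the function-value differences to $\Delta$, taking expectations and dividing by $\epsilon T$ then reduces the theorem to bounding $\tfrac1{\epsilon T}\expect\big[\sum_t\alpha_t^2\big]$ and $\tfrac1{\epsilon T}\expect\big[\sum_t\alpha_t\norm{\xi_t}_*\big]$, on top of the term $\tfrac{\Delta}{\epsilon T}=\tfrac{\Delta}{\sqrt T}$.

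For the curvature term I would use $\alpha_t^2\le\epsilon^2+\eta^2g_t^2/v_t^2$ together with the accumulation rule of Line~\ref{line:accumG}, $g_t^2=v_t^2-v_{t-1}^2$, which gives the pathwise (deterministic) telescoping $\sum_{t=1}^Tg_t^2/v_t^2\le\ln(v_T^2/v_0^2)\le\ln\!\big(1+T\gamma^2/v_0^2\big)$ via $1-x\le\ln(1/x)$; hence $\expect[\sum_t\alpha_t^2]\le T\epsilon^2+\eta^2\ln(1+T\gamma^2/v_0^2)$, and after dividing by $\epsilon T$ this contributes $\tfrac L2\epsilon+o(\epsilon)=\bigO(LT^{-1/2})$ since $\eta^2\ln T=T^{-2q}\ln T\to0$. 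For the noise term I would split $\alpha_t\le\epsilon+\eta g_t/v_t$. The $\epsilon$-part gives $\tfrac2T\expect[\sum_t\norm{\xi_t}_*]\le\tfrac{2\kappa\sqrt r}{T}\sum_tb_t^{-1/2}$ (using $\norm{\xi_t}_*\le\sqrt r\normF{\xi_t}$, Jensen and Assumption~\ref{assum:noise}), which equals $\bigO(\kappa\sqrt r\,T^{-1/4})$ for $b_t=\sqrt t$ and $\bigO(\kappa\sqrt r\,T^{-1/2})$ for $b_t=t$. The $\eta g_t/v_t$-part I would control by Cauchy--Schwarz then Jensen: $\sum_t(g_t/v_t)\norm{\xi_t}_*\le\big(\sum_tg_t^2/v_t^2\big)^{1/2}\big(\sum_t\norm{\xi_t}_*^2\big)^{1/2}\le\sqrt{\ln(1+T\gamma^2/v_0^2)}\,\big(\sum_t\norm{\xi_t}_*^2\big)^{1/2}$, where the first factor is now deterministic, so $\expect\big[\sum_t(g_t/v_t)\norm{\xi_t}_*\big]\le\sqrt{\ln(1+T\gamma^2/v_0^2)}\,\big(r\kappa^2\sum_tb_t^{-1}\big)^{1/2}$; multiplied by $\eta/(\epsilon T)=T^{-q-1/2}$ this is $\bigO(\kappa\sqrt r\,T^{-1/4-q})$ resp.\ $\bigO(\kappa\sqrt r\,T^{-1/2-q})$ up to a logarithmic factor in $T$, hence dominated by the $\epsilon$-part since $q>0$. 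Assembling the pieces yields the two stated bounds.

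The main obstacle is the statistical coupling between the adaptive scalar $\alpha_t$ and the current gradient $G_t$ (hence between $\alpha_t$, $\xi_t$ and $O_t$): one cannot condition on the past to annihilate $\dotP{\xi_t}{O_t}$. Two ingredients resolve this and deserve emphasis: (i) replacing the signed term $\dotP{\Grad\L(\Theta_{t-1})}{O_t}$ by the unsigned lower bound $\norm{\Grad\L(\Theta_{t-1})}_*-2\norm{\xi_t}_*$, so that only the product $\alpha_t\norm{\xi_t}_*$ of nonnegative quantities needs to be bounded; and (ii) the AdaGrad-Norm telescoping identity $\sum_tg_t^2/v_t^2\le\ln(v_T^2/v_0^2)$, which holds pathwise and therefore passes intact through Cauchy--Schwarz and Jensen. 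A secondary point worth noting is that the dimensional factor $\sqrt r=\sqrt{\min\{m,n\}}$ in the rate is forced by the mismatch between the nuclear-norm geometry of the analysis and the Frobenius-norm variance bound, via $\norm{\xi_t}_*\le\sqrt r\,\normF{\xi_t}$.
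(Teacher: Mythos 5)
Your proposal is correct and follows essentially the same route as the paper's proof: a descent step under the spectral-norm smoothness assumption, the identity $\dotP{G_t}{O_t}=\norm{G_t}_*$ to isolate the nuclear norm, the $\epsilon$-floor on the stepsize to divide through, the AdaGrad-type logarithmic bound on $\sum_t g_t^2/v_t^2$ together with Cauchy--Schwarz for the coupled noise term, and the $\sqrt r$ factor from converting nuclear to Frobenius norms before invoking the variance bound. The only deviations are cosmetic: you use the pathwise triangle inequality $\norm{G_t}_*\ge\norm{\Grad\L(\Theta_{t-1})}_*-\norm{\xi_t}_*$ (incurring a harmless factor 2) where the paper uses Jensen's inequality $\norm{\Grad\L(\Theta_{t-1})}_*\le\expect_t\left[\norm{G_t}_*\right]$, and you apply Cauchy--Schwarz pathwise rather than in expectation; neither affects the stated rates.
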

This result implies that AdaGO adapts to the noise level of stochastic gradients, 
which is consistent with the behavior of AdaGrad stepsizes for SGD shown in \cite{li2019convergence}.

\section{Experiments}\label{sec:experiments}

\subsection{Experiment Setup}

\paragraph{Baselines.}
We compare our proposed optimizer, AdaGO, against two strong baselines: Adam \cite{kingma2014adam} and Muon \cite{jordan2024muon}. Adam is a widely used default optimizer for deep learning, while Muon has recently demonstrated strong empirical performance \cite{chen2025muon,shah2025practical}. 
Since Muon and AdaGO are designed specifically for matrix parameters, we use Adam to optimize all scalar and vector parameters in the models. 
For brevity, we refer to these hybrid methods simply as Muon and AdaGO.

In our experiments, we use standard hyperparameter settings for the baselines. For Adam, we set the momentum coefficients to $\beta_1 = 0.9$ and $\beta_2 = 0.95$. For Muon and AdaGO, the momentum coefficient is set to $\beta=0.95$. We perform a grid search to find the optimal learning rate $\eta$ for each optimizer on each task. For AdaGO, we also tune the $\epsilon$ hyperparameter. Weight decay is not used.

\paragraph{Datasets and Models.}
We evaluate the optimizers on two tasks: function regression and image classification on CIFAR-10.

For the function regression task, we generate a dataset by sampling 10,000 points from a Gaussian random field with 50-dimensional input and 50-dimensional output (10\% are used as testing data). We use a two-layer MLP with GeLU activation and a hidden dimension of 100 to fit the data. The model is trained for 1000 steps using the mean squared error loss.

For CIFAR-10 classification, we use a convolutional neural network consisting of 3 convolutional layers and 2 fully connected layers. We train the model for 100 epochs using a batch size of 128 and the standard cross-entropy loss. We report both the training loss and the test accuracy.

\subsection{Results}

The optimal hyperparameters found through our grid search are summarized in Table~\ref{tab:hyper}. 
Although AdaGO introduces an additional hyperparameter $\epsilon$, both theoretical analysis and 
empirical observations indicate that its choice is guided by the value of $\eta$; specifically,
an effective $\epsilon$ satisfies $\epsilon < \eta^2$.
The performance of each optimizer on the two tasks is detailed below.

\begin{table}
\centering
\begin{tabular}{c|c|c}
\hline
Optimizer & Regression Hyperparameters & Classification Hyperparameters \\
\hline
Adam & $\eta=0.01$ & $\eta=3\times 10^{-4}$ \\
Muon & $\eta=5\times 10^{-3}$ & $\eta=2\times 10^{-3}$ \\
AdaGO & $\eta=0.5,~\epsilon=5\times 10^{-3}$ & $\eta=5\times 10^{-2},~\epsilon=5\times 10^{-4}$ \\
\hline
\end{tabular}
\caption{Optimal hyperparameters for different optimizers and tasks.}
\label{tab:hyper}
\end{table}

\begin{figure}[t]
  \centering
  \subfloat[Training loss\label{fig:reg-train-loss}]{
    \includegraphics[width=0.48\linewidth]{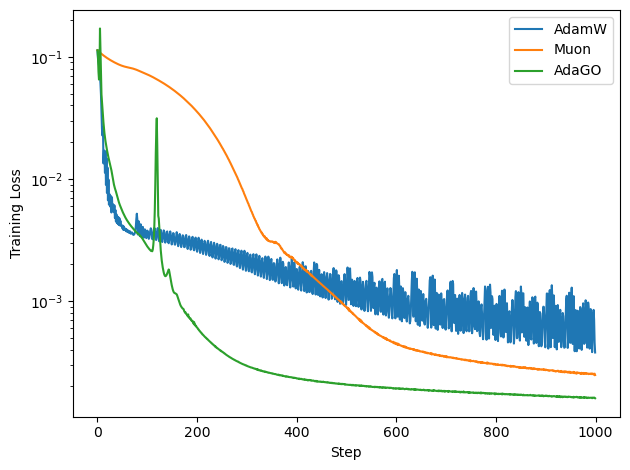}
  }
  \hfill
  \subfloat[Test accuracy\label{fig:reg-test-acc}]{
    \includegraphics[width=0.48\linewidth]{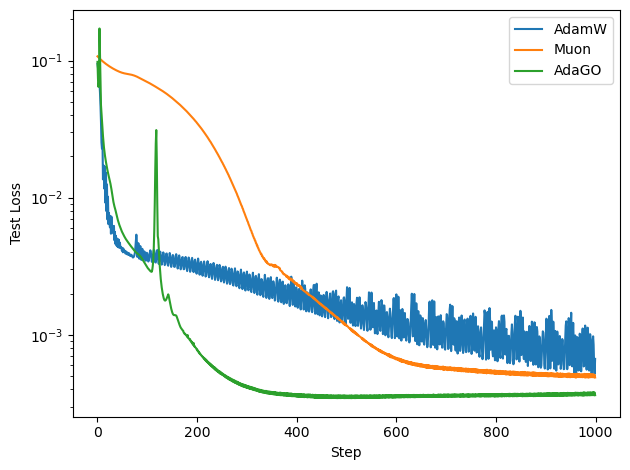}
  }

  \caption{Comparing optimizer performance for the regression task.}
  \label{fig:reg}
\end{figure}

\begin{figure}[t]
  \centering
  \subfloat[Training loss\label{fig:train-loss}]{
    \includegraphics[width=0.48\linewidth]{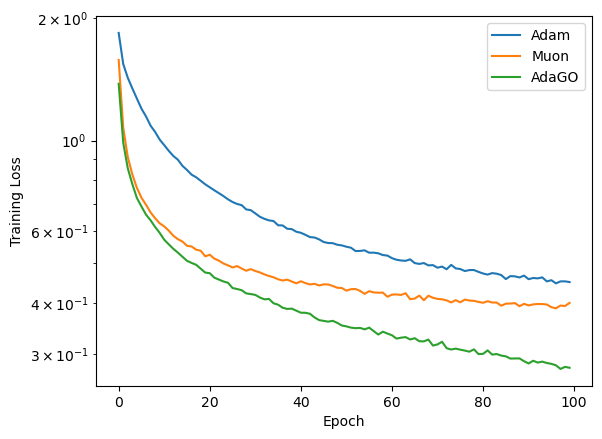}
  }
  \hfill
  \subfloat[Test accuracy\label{fig:test-acc}]{
    \includegraphics[width=0.45\linewidth]{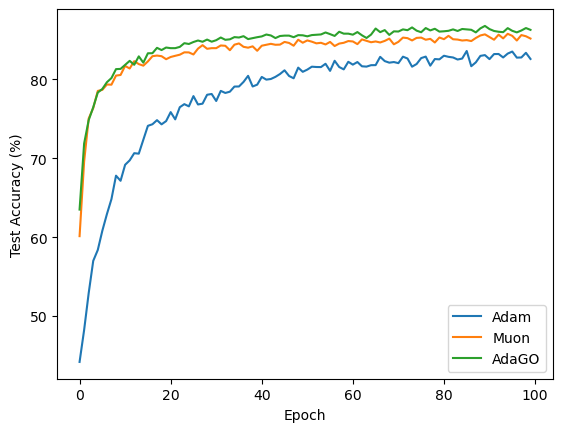}
  }
  \caption{Comparing the performance of optimizers on the CIFAR-10 image classification task.}
  \label{fig:class}
\end{figure}

For the function regression task, we plot the training and test loss curves in Figure~\ref{fig:reg}. 
AdaGO demonstrates superior performance, converging to lower final training and test losses than Adam and Muon.
The plot shows that Adam's convergence is hindered by significant oscillations, 
a common issue with higher learning rate. Muon provides a more stable descent but converges to a higher loss value than AdaGO. While AdaGO's loss curve exhibits some sharp spikes, indicating aggressive updates, it recovers quickly from these perturbations and continues to make progress, highlighting its robust optimization capability.

On the CIFAR-10 classification task, AdaGO's advantages are also evident, as shown in Figure~\ref{fig:class}. Throughout the 100 epochs, AdaGO consistently maintains a lower training loss than the baselines (Figure~\ref{fig:class}a). More importantly, this improved optimization translates to better generalization. In Figure~\ref{fig:class}b, we see that AdaGO achieves higher test accuracy than both Muon and Adam. These results suggest that AdaGO not only accelerates training but also guides the model to a solution that generalizes more effectively.

In summary, across both regression and classification tasks, AdaGO consistently outperforms Adam and Muon, experimentally demonstrating improved performance.

\section{Conclusions and Future Work}\label{sec:conclude}
In this work, we propose AdaGO, a new optimizer that combines a norm-based AdaGrad-type stepsize with an orthogonalized update direction, bringing together
the benefits of both Muon and AdaGrad. Unlike other adaptive variants of Muon, AdaGO preserves the orthogonality of the update directions while adapting stepsizes to the optimization landscape. Its implementation requires only minimal modification to Muon, with a single additional scalar variable, the accumulated squared
gradient norms, to be computed, making it both computationally and memory efficient. We establish optimal convergence rates for nonconvex functions in
both stochastic and deterministic settings under standard assumptions. Experimental results on CIFAR-10 classification and function regression tasks 
demonstrate the consistent improved performance of AdaGO over Muon and Adam. 
Future work includes testing AdaGO on LLM training, analyzing the algorithm under relaxed assumptions, and incorporating practical enhancements, thereby further advancing adaptive strategies for orthogonalized updates.

\section*{Acknowledgement}
\paragraph{Funding.} This work was supported in part by NSF DMS 2331033.
\bibliography{ref}

\appendix
\section{Motivating Example: GD vs. OGD in a Linear Case}\label{appx:linear}
As a motivating example, we compare GD and OGD in training a one-layer linear neural network.
The loss function is given by \[
\L(W) :=\frac{1}{2}\sum_{j=1}^J\norm{Wx_j-y_j}^2,
\]
where $W\in \Re^{m\times d},$ and $\Set{(x_j,y_j}_{j=1}^J$ are training data. 
For simplicity, assume that $X:=[x_1,\cdots,x_J]\subset\Re^{d\times J}$ is of full rank, and the loss function has a unique minimizer $W_*$ 
such that $\L(W_*)=0.$
The gradient of $\L$ is given by \begin{align*}
\Grad\L(W) = & \sum_{j=1}^J\left(Wx_j-y_j\right)x_j^T\nonumber\\
= & \sum_{j=1}^J\left(W-W_*\right) x_jx_j^T\nonumber\\
= & (W-W_*)XX^T.
\end{align*}
Let $\Set{W_t^{\GD}}$ and $\Set{W_t^{\OGD}}$ 
be sequences of iterates generated by GD and OGD respectively, with a learning rate schedule $\Set{\eta_t}.$
For GD, it holds that
\begin{align}\label{eq:linearGD}
\normF{W_{t+1}^{\GD}-W_*} = & \normF{W_{t}^{\GD}-\eta_t\Grad\L\left(W_{t}^{\GD}\right) -W_*}& \nonumber\\
=& \normF{\left(W_t^{\GD}-W_*\right)\left(I-\eta_t XX^T\right)}\nonumber\\
\le&  \norm{I-\eta_tXX^T}_2\normF{W_t^{\GD}-W_*},
\end{align}
where
$\normF{\cdot}$ denotes the Frobenius norm.
On the other hand, for OGD, let \[
\Grad\L( W_{t}^{\OGD}) =U_t\Sigma_t V_t^T
\]
be the reduced SVD for each $t.$ Then \[
\orth{\Grad\L( W_{t}^{\OGD})} = U_tV_t^T = \Grad\L( W_{t}^{\OGD})\left(V_t\Sigma_t^{-1} V_t^T\right)= \Grad\L( W_{t}^{\OGD}) P_t^{-1},
\]
where $$P_t:= V_t\Sigma_t V_t^T+\norm{\Grad\L\left( W_{t}^{\OGD}\right)}_2 \left(I-V_tV_t^T\right)$$ is a positive definite matrix.
It follows that 
\begin{align}\label{eq:linearOGD}
\normF{W_{t+1}^{\OGD}-W_*} = & \normF{W_{t}^{\OGD}-\eta_t\Grad\L( W_{t}^{\OGD}) P_t^{-1} -W_*}\nonumber\\
= & \normF{\left(W_{t}^{\OGD} -W_*\right)\left(I-\eta_t XX^TP_t^{-1}\right)}\nonumber\\
\le & \norm{I-\eta_t XX^TP_t^{-1}}_2\normF{W_t^{\OGD}-W_*},
\end{align}
with $P_t$ having eigenvalues between the largest and the smallest nonzero singular values of $\Grad \L(W_t).$

By Eq. \eqref{eq:linearGD}, the standard GD with a sufficiently small constant learning rate contracts the distance between $W_t^{\GD}$ and $W_*$ at a linear rate.
In contrast, by Eq. \eqref{eq:linearOGD}, when $\norm{\Grad\L(W_t^{\OGD})}$ is large at the start of training, a larger $\eta_t$ enables faster convergence in OGD.
As $\norm{\Grad\L(W_t^{\OGD})}$ decreases toward zero, however, $\eta_t$ must be reduced to ensure convergence of $W_t^{\OGD}$ to $W_*$.
As illustrated in Figure~\ref{fig:linear}, for OGD with a constant learning rate $\eta_t \equiv \eta$, a large $\eta$ initially drives rapid reduction in the training loss but soon leads to plateauing and oscillation, whereas a smaller $\eta$ results in slower convergence but ultimately achieves a lower final loss. Similar behavior is observed when training nonlinear networks (Figure~\ref{fig:oneAttention}).
Therefore, a learning rate schedule adaptively tuned to gradient norms is desirable.

\begin{figure}[htbp]\label{fig:motive}
  \centering
  \subfloat[One-layer linear network \label{fig:linear}]{%
    \includegraphics[width=0.47\textwidth]{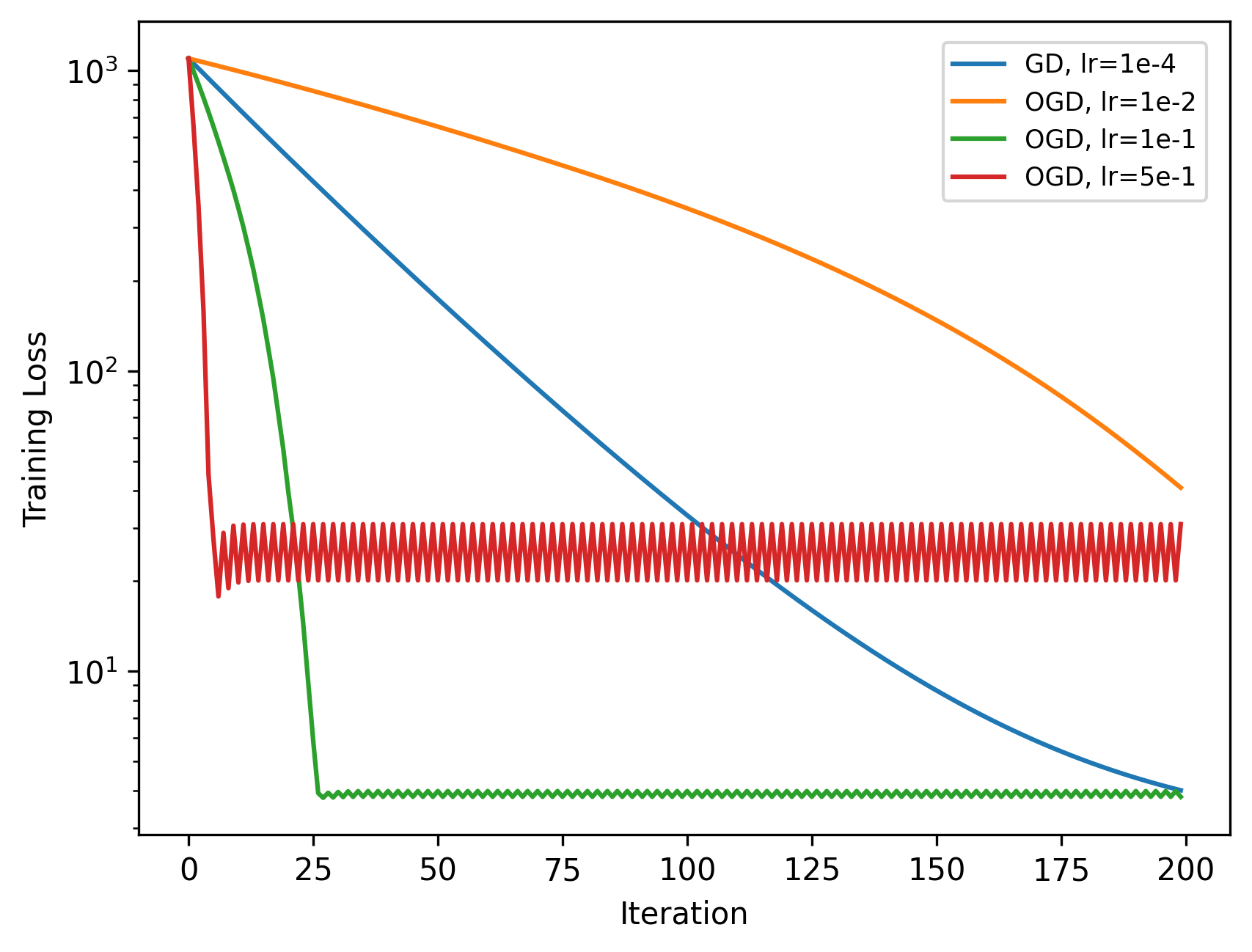}%
  }\hfill
  \subfloat[One-layer attention \label{fig:oneAttention}]{%
    \includegraphics[width=0.47\textwidth]{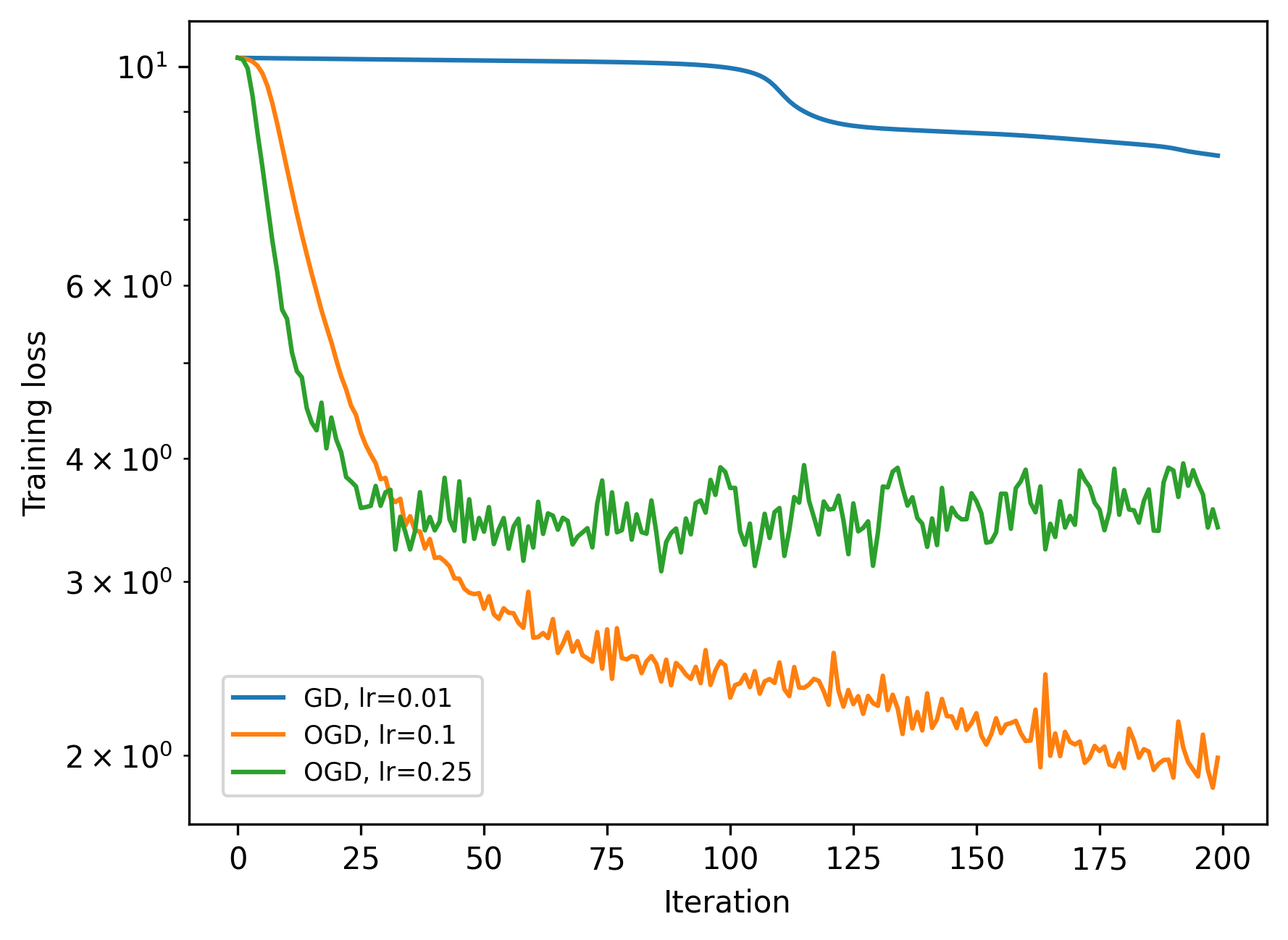}%
  }
  \caption{GD vs OGD with varied learning rates (lr)}
\end{figure}

\section{Useful Lemmas}
\begin{lem}\label{lemma:lips}
The Lipschitz continuity of $\Grad \L$ in Assumption~\ref{assum:func} implies
\begin{equation}\label{eq:smooth}
\L(\Theta')\le \L(\Theta)+\dotP{\Grad\L(\Theta)}{\Theta'-\Theta}+\frac{L}{2}\norm{\Theta'-\Theta}_2^2.
\end{equation}
\end{lem}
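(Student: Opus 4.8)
The plan is to reduce the statement to the fundamental theorem of calculus along the line segment joining $\Theta$ and $\Theta'$, combined with the Hölder-type pairing between the nuclear and spectral norms. Concretely, I would set $\Theta_s := \Theta + s(\Theta'-\Theta)$ for $s\in[0,1]$ and define $\phi(s):=\L(\Theta_s)$. Since $\Grad\L$ is Lipschitz by Assumption~\ref{assum:func}, it is in particular continuous, so $\L$ is $C^1$ and $\phi$ is continuously differentiable with $\phi'(s)=\dotP{\Grad\L(\Theta_s)}{\Theta'-\Theta}$; hence
\[
\L(\Theta')-\L(\Theta)=\phi(1)-\phi(0)=\int_0^1\dotP{\Grad\L(\Theta_s)}{\Theta'-\Theta}\,ds.
\]

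Next I would split off the linear term, writing $\dotP{\Grad\L(\Theta_s)}{\Theta'-\Theta}=\dotP{\Grad\L(\Theta)}{\Theta'-\Theta}+\dotP{\Grad\L(\Theta_s)-\Grad\L(\Theta)}{\Theta'-\Theta}$ and integrating to obtain
\[
\L(\Theta')-\L(\Theta)-\dotP{\Grad\L(\Theta)}{\Theta'-\Theta}=\int_0^1\dotP{\Grad\L(\Theta_s)-\Grad\L(\Theta)}{\Theta'-\Theta}\,ds.
\]
I would then bound the integrand using $\abs{\dotP{A}{B}}\le\norm{A}_*\norm{B}_2$, i.e. the duality of the nuclear and spectral norms under $\dotP{A}{B}=\trace{A^TB}$, applied with $A=\Grad\L(\Theta_s)-\Grad\L(\Theta)$ and $B=\Theta'-\Theta$. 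Invoking Assumption~\ref{assum:func} and $\norm{\Theta_s-\Theta}_2=s\norm{\Theta'-\Theta}_2$ shows the integrand is at most $Ls\norm{\Theta'-\Theta}_2^2$, and $\int_0^1 Ls\,ds = L/2$ gives Eq.~\eqref{eq:smooth}.

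The only step needing care is the duality inequality $\abs{\dotP{A}{B}}\le\norm{A}_*\norm{B}_2$, which I would justify by taking the SVD $A=\sum_i\sigma_i u_iv_i^T$, so that $\dotP{A}{B}=\sum_i\sigma_i\, u_i^T B v_i\le\sum_i\sigma_i\norm{B}_2=\norm{A}_*\norm{B}_2$, using $\abs{u_i^T B v_i}\le\norm{B}_2$ for unit vectors $u_i,v_i$. Everything else is routine; no real obstacle is expected beyond keeping track of which norm acts on which factor, since Assumption~\ref{assum:func} is stated precisely so that the nuclear norm on the gradient side is matched by the spectral norm on the parameter side.
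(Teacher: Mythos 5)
Your proposal is correct and follows essentially the same route as the paper's proof: parametrize along the segment, split off the linear term, and bound the remainder via the nuclear--spectral duality $\abs{\dotP{A}{B}}\le\norm{A}_*\norm{B}_2$ together with Assumption~\ref{assum:func}, integrating $Ls$ to obtain $L/2$. The only difference is that you also spell out the duality inequality via the SVD, which the paper takes as known.
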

\begin{proof}
For $s\in [0,1]$, define $h(s):= \L(\Theta+s(\Theta'-\Theta))$. Then
\begin{align*}
\L(\Theta')-\L(\Theta)&=\int_{0}^{1}h'(s) ds\\
&=\int_{0}^{1}\dotP{\Grad \L(\Theta+s(\Theta'-\Theta))}{\Theta'-\Theta}ds\\
&=\dotP{\Grad\L(\Theta)}{\Theta'-\Theta}+\int_{0}^{1}\dotP{\Grad \L(\Theta+s(\Theta'-\Theta))-\Grad\L(\Theta)}{\Theta'-\Theta}ds\\
&\le \dotP{\Grad\L(\Theta)}{\Theta'-\Theta}+\int_{0}^{1}\norm{\Grad \L(\Theta+s(\Theta'-\Theta))-\Grad\L(\Theta)}_*\norm{\Theta'-\Theta}_2 ds \\
&\le\dotP{\Grad\L(\Theta)}{\Theta'-\Theta}+\frac{L}{2}\norm{\Theta'-\Theta}_2^2.
\end{align*}
\end{proof}

\begin{lem}\label{lem:as}
For arbitrary nonnegative values, $\Set{a_t}_{t=1}^T,$ with $a_1>0,$ it holds that \begin{equation}\label{eq:as}
\sum_{t=1}^T\frac{a_t}{\sum_{\tau=1}^t a_\tau}\le \ln\left(\sum_{t=1}^T \frac{a_t}{a_1}\right)+1.
\end{equation}
\end{lem}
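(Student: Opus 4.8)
The plan is to telescope after bounding each summand by a logarithmic difference. Introduce the partial sums $S_t := \sum_{\tau=1}^t a_\tau$, so that $S_1 = a_1 > 0$, the sequence $\{S_t\}$ is nondecreasing, hence $S_t \ge a_1 > 0$ for every $t$, and $a_t = S_t - S_{t-1}$ for $t \ge 2$. The first term is exactly $a_1/S_1 = 1$, which will supply the additive constant in Eq.~\eqref{eq:as}, so I would peel it off and treat $t = 1$ separately.

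For $t \ge 2$ I would bound $a_t/S_t = (S_t - S_{t-1})/S_t$. Since $S_{t-1} > 0$ and the function $s \mapsto 1/s$ is at least $1/S_t$ on the interval $[S_{t-1}, S_t]$, integrating gives
\[
\frac{a_t}{S_t} \;=\; \frac{S_t - S_{t-1}}{S_t} \;\le\; \int_{S_{t-1}}^{S_t} \frac{ds}{s} \;=\; \ln S_t - \ln S_{t-1}.
\]
Equivalently, this is the elementary inequality $1 - u \le -\ln u$ applied with $u = S_{t-1}/S_t \in (0,1]$. Note the bound also holds trivially whenever $a_t = 0$, since then $S_t = S_{t-1}$ and both sides vanish; hence no separate case analysis for vanishing $a_t$ is needed.

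Summing the displayed bound over $t = 2, \dots, T$, the right-hand side telescopes to $\ln S_T - \ln S_1 = \ln\!\left(\sum_{t=1}^T a_t / a_1\right)$, and adding back the $t = 1$ term $a_1/S_1 = 1$ yields
\[
\sum_{t=1}^T \frac{a_t}{S_t} \;\le\; 1 + \ln\!\left(\sum_{t=1}^T \frac{a_t}{a_1}\right),
\]
which is Eq.~\eqref{eq:as}. I do not anticipate a real obstacle here; the only points requiring care are that $S_{t-1}$ is strictly positive for $t \ge 2$ — guaranteed by the hypothesis $a_1 > 0$ — so that the logarithm and the integral comparison are well defined, and that the first summand must be handled on its own since $S_0 = 0$.
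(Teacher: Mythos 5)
Your proof is correct and follows essentially the same route as the paper: handle the $t=1$ term separately (it equals $1$), bound each subsequent term by $\ln S_t - \ln S_{t-1}$, and telescope. The only cosmetic difference is that you justify the per-term bound via the integral of $1/s$ (equivalently $1-u\le-\ln u$), whereas the paper invokes the Mean Value Theorem, which yields the identical inequality.
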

\begin{proof}
A similar result is shown in \cite[Lemma 3.2]{ward2020adagrad}. For completeness, we include the proof here.
Write $S_t:=\sum_{\tau=1}^t a_\tau.$ We first show that \begin{equation*}
\frac{a_t}{S_t} \le \ln(S_t)-\ln(S_{t-1})
\end{equation*}
for all $t\ge 2.$ Indeed, by the Mean Value Theorem, there exists $\xi_t\in [S_{t-1},S_t]$
such that \[\ln(S_t)-\ln(S_{t-1})=\frac{S_t-S_{t-1}}{\xi_t}=\frac{a_t}{\xi_t}\ge\frac{a_t}{S_t}.\]
Hence, \[
\sum_{t=1}^T \frac{a_t}{S_t}\le 1+\sum_{t=2}^T \left(\ln(S_t)-\ln(S_{t-1})\right)=1+\ln(S_T)-\ln(S_1)=1+\ln\left(\frac{S_T}{a_1}\right),
\]
which is exactly Eq. \eqref{eq:as}.
\end{proof}


\section{Proof of Theorem~\ref{thm:sm}}\label{appendix:sm}
\begin{proof}
Write \[
\alpha_t:= \frac{\min\{\norm{G_t},\gamma\}}{v_t}.
\]
Let $\expect_t[\cdot]:=\expect[\cdot|\Theta_{t-1}]$ denote the conditional expectation given the previous iterates
$\Theta_0,\cdots,\Theta_{t-1}.$ 
By Lemma~\ref{lemma:lips},
\begin{align*}
&\expect_t\left[\L(\Theta_{t})-\L(\Theta_{t-1})\right]\\
\le &\expect_t\left[-\dotP{\Grad\L(\Theta_{t-1})}
{ \max\left\{\epsilon,\eta\alpha_t\right\} O_t}\right]+  \frac{L}{2}\expect_t\left[ \max\left\{\epsilon,\eta\alpha_t\right\}^2\right]\\
= & \expect_t\left[-\dotP{\Grad\L(\Theta_{t-1})-M_t}{ \max\left\{\epsilon,\eta\alpha_t\right\} O_t}\right] 
-\expect_t\left[ \max\left\{\epsilon,\eta\alpha_t\right\}\norm{M_t}_*\right] +\frac{L\epsilon^2}{2}+
\frac{\eta^2 L}{2}\expect_t\left[ \alpha_t^2\right]\\
\le & \expect_t\left[ \max\left\{\epsilon,\eta\alpha_t\right\}\norm{\Grad\L(\Theta_{t-1})-M_t}_*\right] 
-\expect_t\left[ \max\left\{\epsilon,\eta\alpha_t\right\}\norm{M_t}_*\right] 
+\frac{L\epsilon^2}{2}+\frac{\eta^2 L}{2}\expect_t\left[ \alpha_t^2\right]\\
\le & \expect_t\left[2 \max\left\{\epsilon,\eta\alpha_t\right\}\norm{\Grad\L(\Theta_{t-1})-M_t}_*\right] 
-\expect_t\left[ \max\left\{\epsilon,\eta\alpha_t\right\}\norm{\Grad\L(\Theta_{t-1})}_*\right] 
+\frac{L\epsilon^2}{2}+\frac{\eta^2 L}{2}\expect_t\left[ \alpha_t^2\right].
\end{align*}
Then by the law of total expectation,
\begin{align}\label{eq:totalM}
&\sum_{t=1}^T \expect\left[ \max\left\{\epsilon,\eta\alpha_t\right\}\norm{\Grad\L(\Theta_{t-1})}_*\right]\nonumber\\
\le &  \L(\Theta_0)-\L_\star+2\sum_{t=1}^T\expect\left[
 \max\left\{\epsilon,\eta\alpha_t\right\}\norm{\Grad\L(\Theta_{t-1})-M_t}_*\right]
+\frac{L\epsilon^2 T}{2}+\frac{\eta^2 L}{2}\sum_{t=1}^T \expect\left[ \alpha_t^2\right]\nonumber\\
\le & \L(\Theta_0)-\L_\star+2\epsilon\sum_{t=1}^T\expect\left[\norm{E_t}_*\right]
+2\eta\sum_{t=1}^T\expect\left[\alpha_t \norm{E_t}_*\right]+
\frac{L\epsilon^2 T}{2}+\frac{\eta^2 L}{2}\sum_{t=1}^T\expect\left[ \alpha_t^2\right],
\end{align}
where $E_t:=M_t-\Grad\L(\Theta_{t-1})$ and $\L_*:=\min_{\Theta}\L(\Theta)$.

By Cauchy-Schwarz inequality and Lemma~\ref{lem:as}, the third term on the right side of the above inequality satisfies
\begin{align}\label{eq:csM}
\sum_{t=1}^T\expect\left[ \alpha_t\norm{E_t}_*\right]
\le &\sum_{t=1}^T\sqrt{\expect\left[\norm{E_t}_*^2\right]\expect\left[
\alpha_t^2\right]}\nonumber\\
\le & \sqrt{\sum_{t=1}^T \expect\left[\norm{E_t}_*^2\right]}
\sqrt{\sum_{t=1}^T \expect\left[ \alpha_t^2\right]}\nonumber\\
\le & \sqrt{\sum_{t=1}^T \expect\left[\norm{E_t}_*^2\right]}\sqrt{\ln\left(\frac{\gamma^2}{v_0^2}T\right)+1}.
\end{align}
Now write $\tilde E_t:=G_t-\Grad\L(\Theta_{t-1})$ for $t\ge 1.$
Then \begin{align*}
M_{t+1} =&  \mu M_t+(1-\mu)G_{t+1}\\
= & \mu \left(E_t+\Grad\L(\Theta_{t-1})\right)+(1-\mu)\left(\tilde E_{t+1}+\Grad\L(\Theta_t)\right)\\
= & \Grad\L(\Theta_t)+\mu\left(\Grad\L(\Theta_{t-1})-\Grad\L(\Theta_t)\right)+\mu E_t+(1-\mu)\tilde E_{t+1}.
\end{align*}
Hence, for $t\ge 0,$
\[
E_{t+1} = \mu\left(\Grad\L(\Theta_{t-1})-\Grad\L(\Theta_t)\right)+\mu E_t+(1-\mu)\tilde E_{t+1}.
\]
A recursive formula can be derived as in the proof of \cite[Theorem]{cutkosky2020momentum}:
\begin{equation}\label{eq:recursiveE}
E_{t+1} = \mu^t E_1+(1-\mu)\sum_{\tau=0}^{t-1}\mu^\tau \tilde E_{t+1-\tau}+
\mu\sum_{\tau=0}^{t-1}\mu^\tau\left(\Grad\L(\Theta_{t-\tau-1})-\Grad\L(\Theta_{t-\tau})\right).
\end{equation}
Assuming a minibatch of size $b>0$ is sampled independently at each iteration, it follows that
\begin{align*}
&\expect\left[\norm{E_{t+1}}_*\right]\le \mu^t \expect\left[\norm{E_1}_*\right]+(1-\mu)\expect\left[\norm{\sum_{\tau=0}^{t-1}\mu^\tau 
\tilde E_{t+1-\tau}}_*\right]+
\mu L\sum_{\tau=0}^{t-1}\mu^\tau \expect\left[\max\{\epsilon,\eta\alpha_{t-\tau}\}\right]\\
\le & \frac{\mu^t\kappa\sqrt{r}}{\sqrt{b}}+(1-\mu)\sqrt{r}\expect\left[\normF{\sum_{\tau=0}^{t-1}\mu^\tau 
\tilde E_{t+1-\tau}}\right]+\mu L\epsilon\frac{1-\mu^t}{1-\mu}+
\mu\eta L\sum_{\tau=0}^{t-1}\mu^\tau \expect\left[\alpha_{t-\tau}\right]\\
\le &  \frac{\mu^t\kappa\sqrt{r}}{\sqrt{b}}+(1-\mu)\sqrt{r}\expect\left[\normF{\sum_{\tau=0}^{t-1}\mu^\tau 
\tilde E_{t+1-\tau}}^2\right]^{\frac{1}{2}}+\mu L\epsilon\frac{1-\mu^t}{1-\mu}+
\mu\eta L\sum_{\tau=0}^{t-1}\mu^\tau  \expect\left[\alpha_{t-\tau}\right]\\
\le &  \frac{\mu^t\kappa\sqrt{r}}{\sqrt{b}}+(1-\mu)\frac{\kappa\sqrt{r}}{\sqrt{b}}\sqrt{\sum_{\tau=0}^{t-1}\mu^{2\tau} }+\mu L\epsilon\frac{1-\mu^t}{1-\mu}+
\mu\eta L\sum_{\tau=0}^{t-1}\mu^\tau  \expect\left[\alpha_{t-\tau}\right]\\
= & \frac{\mu^t\kappa\sqrt{r}}{\sqrt{b}}+(1-\mu)\frac{\kappa\sqrt{r}}{\sqrt{b}}\sqrt{\frac{1-\mu^{2t}}{1-\mu^2}}+\mu L\epsilon\frac{1-\mu^t}{1-\mu}+
\mu\eta L\sum_{\tau=0}^{t-1}\mu^\tau  \expect\left[\alpha_{t-\tau}\right].
\end{align*}
Therefore,
\begin{align}\label{eq:errorN}
\sum_{t=0}^T\expect\left[\norm{E_{t+1}}_*\right]\le & \frac{\kappa\sqrt{r}}{(1-\mu)\sqrt{b}}+T\kappa\sqrt{\frac{r(1-\mu)}{b}}+\frac{T\mu L\epsilon}{1-\mu}+
\mu\eta L\sum_{t=1}^T\sum_{\tau=0}^{t-1}\mu^\tau  \expect\left[\alpha_{t-\tau}\right]\nonumber\\
\le & \frac{\kappa\sqrt{r}}{(1-\mu)\sqrt{b}}+T\kappa\sqrt{\frac{r(1-\mu)}{b}}+\frac{T\mu L\epsilon}{1-\mu}+\frac{\mu\eta L}{1-\mu}\sum_{t=1}^T
\expect\left[\alpha_{t}\right]\nonumber\\
\le & \frac{\kappa\sqrt{r}}{(1-\mu)\sqrt{b}}+T\kappa\sqrt{\frac{r(1-\mu)}{b}}+\frac{T\mu L\epsilon}{1-\mu}+\frac{\mu\eta L}{1-\mu}\sqrt{T}\left(\sum_{t=1}^T
\expect\left[\alpha_{t}^2\right]\right)^{\frac{1}{2}}\nonumber\\
\le & \frac{\kappa\sqrt{r}}{(1-\mu)\sqrt{b}}+T\kappa\sqrt{\frac{r(1-\mu)}{b}}+\frac{T\mu L\epsilon}{1-\mu}+\frac{\mu\eta L}{1-\mu}\sqrt{T}\left(\ln\left(\frac{\gamma^2}{v_0^2}T\right)+1\right).
\end{align}
Also, by Eq. \eqref{eq:recursiveE},
\begin{align*}
\expect\left[\normF{E_{t+1}}^2\right]=&\mu^{2t}\expect\left[\normF{E_1}^2\right]+\mu^{t+1}
\sum_{\tau=0}^{t-1}\mu^\tau\expect\left[\dotP{E_1}{\Grad\L(\Theta_{t-\tau-1})-\Grad\L(\Theta_{t-\tau})}\right]\\
&+(1-\mu)^2\sum_{\tau=0}^{t-1}\mu^{2\tau} \expect\left[\normF{\tilde E_{t+1-\tau}}^2\right]+
\mu^2\expect\left[\normF{\sum_{\tau=0}^{t-1}\mu^\tau\left(\Grad\L(\Theta_{t-\tau-1})-\Grad\L(\Theta_{t-\tau})\right)}^2\right]\\
\le& \frac{\mu^{2t}\kappa^2}{b}+ L\mu^{t+1}
\sum_{\tau=0}^{t-1}\mu^\tau\expect\left[\normF{E_1}\max\{\epsilon,\eta\alpha_{t-\tau}\}\right]\\
&+(1-\mu)^2\frac{\kappa^2}{b}\sum_{\tau=0}^{t-1}\mu^{2\tau}+
\mu^2\expect\left[\left(\sum_{\tau=0}^{t-1}\mu^\tau\normF{\Grad\L(\Theta_{t-\tau-1})-\Grad\L(\Theta_{t-\tau})}\right)^2\right]\\
\end{align*}
Then by Cauchy-Schwarz inequality,
\begin{align}\label{eq:errorM}
\expect\left[\normF{E_{t+1}}^2\right]\le & \frac{\mu^{2t}\kappa^2}{b}+\frac{ L\kappa\epsilon\mu^{t+1}}{\sqrt{b}}\frac{1-\mu^t}{1-\mu}
+\frac{\eta L\kappa\mu^{t+1}}{\sqrt{b}}\sum_{\tau=0}^{t-1}\mu^\tau
\sqrt{\expect\left[\alpha_{t-\tau}^2\right]}+
\frac{\kappa^2}{b}\frac{(1-\mu)(1-\mu^{2t})}{1+\mu}\nonumber\\
&+ L^2\mu^2\epsilon^2\left(\frac{1-\mu^{2t}}{1-\mu^2}\right) +\eta^2 L^2\mu^2\left(\sum_{\tau=0}^{t-1}\mu^{2\tau}\right)
\expect\left[\sum_{\tau=0}^{t-1}\alpha_{t-\tau}^2\right]\nonumber\\
\le & \frac{\mu^{2t}\kappa^2}{b}+\frac{L\kappa\epsilon\mu^{t+1}}{\sqrt{b}}\frac{1-\mu^t}{1-\mu}+\frac{\eta L\kappa\mu^{t+1}}{\sqrt{b}}
\left(\sqrt{\frac{1-\mu^{2t}}{1-\mu^2}}\right)
\sqrt{\expect\left[\sum_{\tau=0}^{t-1}\alpha_{t-\tau}^2\right]}+\frac{\kappa^2}{b}\frac{(1-\mu)(1-\mu^{2t})}{1+\mu}\nonumber\\
& + L^2\mu^2\epsilon^2\left(\frac{1-\mu^{2t}}{1-\mu^2}\right) +\eta^2 L^2\mu^2\left(\frac{1-\mu^{2t}}{1-\mu^2}\right)
\expect\left[\sum_{\tau=0}^{t-1}\alpha_{t-\tau}^2\right].
\end{align}
Applying \cite[Lemma 3.2]{ward2020adagrad} gives
\[
\sum_{t=1}^T\sum_{\tau=0}^{t-1}\alpha_{t-\tau}^2= 
\sum_{t=1}^T\sum_{\tau=1}^{t}\alpha_\tau^2
= \sum_{t=1}^T (T-t+1) \alpha_t^2
\le T\sum_{t=1}^T  \alpha_t^2
\le T\ln\left(\left(\frac{\gamma^2}{v_0^2}T\right)+1\right).
\]
Also note that \[
\sum_{t=1}^T \frac{(1-\mu)(1-\mu^{2t})}{1+\mu} = \frac{1-\mu}{1+\mu} T-\frac{\mu^2}{(1+\mu)^2}(1-\mu^{2T})\le \frac{1-\mu}{1+\mu} T.
\]
Therefore, it follows from Eq. \eqref{eq:errorM} that
\begin{align}\label{eq:errorBM}
\sum_{t=0}^T \expect\left[\normF{E_{t+1}}^2\right]\le & \frac{1}{1-\mu^2}\frac{\kappa^2}{b}
+\frac{ L\kappa\epsilon}{(1-\mu)^2\sqrt{b}}+ \frac{1-\mu}{1+\mu}\frac{\kappa^2 T}{b}
+ \frac{\eta^2L^2\mu^2 }{1-\mu^2}T\left(\ln\left(\left(\frac{\gamma^2}{v_0^2}T\right)+1\right)\right)\nonumber\\
& + \frac{ L^2\mu^2\epsilon^2 T}{1-\mu^2}+\frac{\eta L\kappa}{\sqrt{b}\sqrt{1-\mu^2}}\sqrt{\left(\sum_{t=1}^T\mu^{2t+2}\right)
\left(\sum_{t=1}^T\sum_{\tau=0}^{t-1}\alpha_{t-\tau}^2\right)}\nonumber\\
\le & \frac{1}{1-\mu^2}\frac{\kappa^2}{b}+\frac{ L\kappa\epsilon}{(1-\mu)^2\sqrt{b}}+ \frac{1-\mu}{1+\mu}\frac{\kappa^2 T}{b}
+ \frac{\eta^2L^2\mu^2 }{1-\mu^2}T\left(\ln\left(\left(\frac{\gamma^2}{v_0^2}T\right)+1\right)\right)\nonumber\\
&+ \frac{ L^2\mu^2\epsilon^2 T}{1-\mu^2}+\frac{\mu\eta L\kappa}{(1-\mu^2)\sqrt{b}}\sqrt{ T\ln\left(\left(\frac{\gamma^2}{v_0^2}T\right)+1\right)}.
\end{align}
Then by Eq. \eqref{eq:totalM} and Eq. \eqref{eq:csM},
\begin{align*}
&\sum_{t=1}^T \expect\left[ \max\left\{\epsilon,\eta\alpha_t\right\}\norm{\Grad\L(\Theta_{t-1})}_*\right]\nonumber\\
\le &  \L(\Theta_0)-\L_\star+2\eta\sqrt{\sum_{t=1}^T \expect\left[\norm{E_{t}}_*^2\right]}
\sqrt{\ln\left(\frac{\gamma^2}{v_0^2}T\right)+1}+2\epsilon\sum_{t=1}^T\expect\left[\norm{E_t}_*\right]
+ \frac{ L}{2}\epsilon^2 T+ \frac{\eta^2 L}{2}\sum_{t=1}^T\expect\left[\alpha_t^2\right]\\
\le &\L(\Theta_0)-\L_\star+2\eta\sqrt{r\sum_{t=1}^T \expect\left[\normF{E_{t}}^2\right]}
\sqrt{\ln\left(\frac{\gamma^2}{v_0^2}T\right)+1}
+2\epsilon\sum_{t=1}^T\expect\left[\norm{E_t}_*\right] +\frac{L\epsilon^2 T}{2}
+ \frac{\eta^2 L}{2}\sum_{t=1}^T\expect\left[\alpha_t^2\right],
\end{align*}
where $r:=\min\{m,n\}.$
Combining the above with Eq. \eqref{eq:errorN} and Eq. \eqref{eq:errorBM} gives
\begin{align}\label{eq:long}
&\frac{1}{T}\sum_{t=1}^T
\expect\left[\norm{\Grad\L(\Theta_{t-1})}_*\right]\nonumber\\
\le & \frac{1}{T\epsilon}\sum_{t=1}^T\expect\left[ \max\left\{\epsilon,\eta\alpha_t\right\}\norm{\Grad\L(\Theta_{t-1})}_*\right]\nonumber\\
\le& \frac{\Delta}{\epsilon T}+\frac{ L\epsilon}{2}
+\frac{\eta^2 L}{2\epsilon T}\left(\ln\left(\frac{\gamma^2}{v_0^2}T\right)+1\right)
+\frac{2}{T}\sum_{t=1}^T\expect\left[\norm{E_t}_*\right]+ \frac{2\eta}{T\epsilon}\sqrt{r\sum_{t=1}^T \expect\left[\normF{E_{t}}^2\right]}
\sqrt{\ln\left(\frac{\gamma^2}{v_0^2}T\right)+1}\nonumber\\
\le &\frac{\Delta}{\epsilon T}+ \frac{L\epsilon}{2}
+\frac{\eta^2 L}{2\epsilon T}\left(\ln\left(\frac{\gamma^2}{v_0^2}T\right)+1\right)
+2\left(\frac{\kappa\sqrt{r}}{(1-\mu)T\sqrt{b}}+\kappa\sqrt{\frac{r(1-\mu)}{b}}+\frac{\mu L\epsilon}{1-\mu}\right.\nonumber\\
& \left. +\frac{\mu\eta L}{(1-\mu)\sqrt{T}}\left(\ln\left(\frac{\gamma^2}{v_0^2}T\right)+1\right)\right)
+ \frac{2\eta\sqrt{r}}{\sqrt{T}\epsilon}\left(\sqrt{\ln\left(\frac{\gamma^2}{v_0^2}T\right)+1}\right)
\left(\frac{\sqrt{ L\kappa\epsilon}}{(1-\mu)\sqrt{T}b^{1/4}}+\frac{L\mu\epsilon}{\sqrt{1-\mu^2}}\right.\nonumber\\
&
\left.+\frac{\kappa}{\sqrt{bT}\sqrt{1-\mu^2}}+\sqrt{\frac{1-\mu}{1+\mu}}\frac{\kappa}{\sqrt{b}}
+ \frac{\eta L\mu}{\sqrt{1-\mu^2}}\sqrt{\ln\left(\frac{\gamma^2}{v_0^2}T\right)+1}
+\frac{1}{(bT)^{1/4}}\sqrt{\frac{\mu\eta L\kappa}{1-\mu^2}}\left(\ln\left(\frac{\gamma^2}{v_0^2}T\right)+1\right)^{1/4}
\right).
\end{align}

In particular, if choosing $b=1$, $\epsilon=T^{-\frac{3}{4}}$,
$1-\mu=T^{-\frac{1}{2}}$, and $\eta= T^{-\left(\frac{3}{8}+q\right)}$ for arbitrary $q>0$, then by Eq. \eqref{eq:long}, \begin{align*}
&\frac{1}{T}\sum_{t=1}^T \expect\left[\norm{\Grad\L(\Theta_{t-1})}_*\right]\\
\le &\frac{\Delta}{T^{1/4}}+\frac{L}{2T^{3/4}}
+\frac{L}{2T^{1+2q}}\left(\ln T+1\right)
+2\left(\frac{\kappa\sqrt{r}}{T^{1/2}}+\frac{\kappa \sqrt{r}}{T^{1/4}}+\frac{L}{T^{1/4}}+\frac{L}{T^{3/8+q}}\left(\ln \left(\frac{\gamma^2}{v_0^2}T\right)+1\right)\right)\nonumber\\
& + 2\left(\sqrt{\ln T+1}\right)
\left(\frac{\sqrt{ L\kappa r}}{T^{1/2+q}}+\frac{L\sqrt{r}}{T^{5/8+q}}+\frac{2\kappa\sqrt{r}}{T^{3/8+q}}
+\frac{L\sqrt{r}}{T^{1/4+q}}\sqrt{\ln T+1}+\frac{\sqrt{rL\kappa}}{T^{5/16+3q/2}}\left(\ln \left(\frac{\gamma^2}{v_0^2}T\right)+1\right)^{\frac{1}{4}}\right)\nonumber\\
= & \bigO\left(\frac{\Delta+\kappa\sqrt{r}+L}{T^{1/4}}+\frac{L\sqrt{r}}{T^{1/4+q}}\left(\ln \left(\frac{\gamma^2}{v_0^2}T\right)+1\right)\right),
\end{align*}
for large $T$. The proof is thus completed.
\end{proof}

\section{Proof of Theorem~\ref{thm:determin}}\label{appendix:determin}
\begin{proof}
Write \[
\alpha_t:= \frac{\min\{\norm{\Grad\L(\Theta_t)},\gamma\}}{v_t}.
\]
By Lemma~\ref{lemma:lips},
\begin{align*}
\L(\Theta_{t})-\L(\Theta_{t-1})
\le & -\dotP{\Grad\L(\Theta_{t-1})}{\max\{\epsilon,\eta\alpha_t\}O_t} +\frac{ L}{2}\max\{\epsilon^2,\eta^2\alpha_t^2\}\\
\le & -\max\{\epsilon,\eta\alpha_t\}\norm{\Grad\L(\Theta_{t-1})}_*+\frac{L\epsilon^2}{2}+\frac{\eta^2 L}{2}\alpha_t^2.
\end{align*}
Hence, by Lemma~\ref{lem:as}, \[
\sum_{t=1}^T \max\{\epsilon,\alpha_t\}\norm{\Grad\L(\Theta_{t-1})}_*\le\Delta+\frac{L\epsilon^2 T}{2}+\frac{\eta^2 L}{2}\left(
\ln\left(\frac{\gamma^2}{v_0^2}T\right)+1\right).
\]
It follows that \begin{align*}
\frac{1}{T}\sum_{t=1}^T \norm{\Grad\L(\Theta_{t-1})}_*\le& \frac{1}{T\epsilon}\sum_{t=1}^T \max\{\epsilon,\alpha_t\}\norm{\Grad\L(\Theta_{t-1})}_*\nonumber\\
\le & \frac{\Delta}{T\epsilon}+\frac{L\epsilon}{2}+\frac{\eta^2 L}{2T\epsilon}
\left(\ln\left(\frac{\gamma^2}{v_0^2}T\right)+1\right).
\end{align*}

In particular, if choosing $\epsilon = T^{-\frac{1}{2}}$ and $\eta=T^{-q}$ for arbitrary $q>0,$ then
\[
\frac{1}{T}\sum_{t=1}^T \expect\left[\norm{\Grad\L(\Theta_{t-1})}_*\right]\le \frac{2\Delta+L}{2\sqrt{T}}+\frac{L}{2T^{2q+1/2}}
\left(\ln \left(\frac{\gamma^2}{v_0^2}T\right)+1\right).
\]
For large $T>0$, 
\[
\frac{1}{T}\sum_{t=1}^T \expect\left[\norm{\Grad\L(\Theta_{t-1})}_*\right]\le \bigO\left(\frac{\Delta+L}{\sqrt{T}}\right)
\]
The proof is thus completed.
\end{proof}

\section{Proof of Theorem~\ref{thm:sg}}\label{appendix:sg}
\begin{proof}
Write \[
\alpha_t:= \frac{\min\{\norm{G_t},\gamma\}}{v_t}.
\]
Let $\expect_t[\cdot]:=\expect[\cdot|\Theta_{t-1}]$ denote the conditional expectation given the previous iterates
$\Theta_0,\cdots,\Theta_{t-1}.$ By Lemma~\ref{lemma:lips},
\begin{align*}
&\expect_t\left[\L(\Theta_{t})-\L(\Theta_{t-1})\right]\\
\le & \expect_t\left[-\dotP{\Grad\L(\Theta_{t-1})}
{\max\{\epsilon,\eta\alpha_t\}O_t}\right] + \frac{L}{2}\expect_t\left[\max\{\epsilon^2,\eta^2\alpha_t^2\}\right]\\
= & \expect_t\left[-\dotP{\Grad\L(\Theta_{t-1})-G_t}{\max\{\epsilon,\eta\alpha_t\}O_t}\right] 
-\expect_t\left[\max\{\epsilon,\eta\alpha_t\}\norm{G_t}_*\right] + \frac{L\epsilon^2}{2}+
\frac{\eta^2 L}{2}\expect_t\left[\alpha_t^2\right]\\
\le & \expect_t\left[\norm{\Grad\L(\Theta_{t-1})-G_t}_*\left(\epsilon+\eta\alpha_t\right)\right]
-\expect_t\left[\max\{\epsilon,\eta\alpha_t\}\norm{G_t}_*\right]
+ \frac{L\epsilon^2}{2}+
\frac{\eta^2 L}{2}\expect_t\left[\alpha_t^2\right].
\end{align*}
Write $E_t:=\Grad\L(\Theta_{t-1})-G_t$. It follows that \begin{align*}
\expect_t\left[\norm{G_t}_*\right] \le & \frac{1}{\epsilon}\expect_t\left[\max\{\epsilon,\eta\alpha_t\}\norm{G_t}_*\right]\\
\le & \expect_t\left[\frac{\L(\Theta_{t-1})-\L(\Theta_{t})}{\epsilon}\right]+
\expect_t\left[\norm{E_t}_*\right]+\frac{\eta}{\epsilon}\expect_t\left[\norm{E_t}_*\alpha_t\right]
+ \frac{L\epsilon}{2}+
\frac{\eta^2 L}{2\epsilon}\expect_t\left[\alpha_t^2\right].
\end{align*}
Then by Jensen's inequality,
\begin{align}\label{eq:sgs}
\frac{1}{T}\sum_{t=1}^T \expect\left[\norm{\Grad\L(\Theta_{t-1})}_*\right]\le & \frac{1}{T}\sum_{t=1}^T \expect_t\left[\norm{G_t}_*\right]\nonumber\\
\le & \frac{\Delta}{\epsilon T}+\frac{1}{T}\sum_{t=1}^T\sqrt{\expect_t\left[\norm{E_t}_*^2\right]}
+ \frac{\epsilon L}{2}+
\frac{\eta^2 L}{2\epsilon T}\sum_{t=1}^T\expect\left[\alpha_t^2\right]+\frac{\eta}{T\epsilon}\sum_{t=1}^T\expect\left[\norm{E_t}_*\alpha_t\right]\nonumber\\
\le &  \frac{\Delta}{\epsilon T}+ \frac{\epsilon L}{2}+\frac{\kappa\sqrt{r}}{T}\sum_{t=1}^T\frac{1}{\sqrt{b_t}}
+\frac{\eta^2 L}{2\epsilon T}\sum_{t=1}^T\expect\left[\alpha_t^2\right]+\frac{\eta}{T\epsilon}\sum_{t=1}^T\expect\left[\norm{E_t}_*\alpha_t\right].
\end{align}
By Lemma~\ref{lem:as}, \begin{equation}\label{eq:ln}
\sum_{t=1}^T\expect\left[\alpha_t^2\right]\le\ln\left(\frac{\gamma^2}{v_0^2}T\right)+1.
\end{equation}
Then by Cauchy-Schwarz inequality, \begin{align}\label{eq:csln}
\sum_{t=1}^T\expect\left[\norm{E_t}_*\alpha_t\right]\le &\sum_{t=1}^T\sqrt{\expect\left[\norm{E_t}_*^2\right]\expect\left[\alpha_t^2\right]}\nonumber\\
\le & \sqrt{\sum_{t=1}^T\frac{\kappa^2 r}{b_t}}\sqrt{\sum_{t=1}^T\expect\left[\alpha_t^2\right]}\nonumber\\
\le & \kappa \sqrt{\sum_{t=1}^T\frac{r}{b_t}}\sqrt{\ln\left(\frac{\gamma^2}{v_0^2}T\right)+1}.
\end{align}
Combining Eq. \eqref{eq:sgs}, Eq. \eqref{eq:ln} and Eq. \eqref{eq:csln} gives
\begin{align*}
&\frac{1}{T}\sum_{t=1}^T\expect\left[\norm{\Grad\L(\Theta_{t-1})}_*\right]\nonumber\\
\le &\frac{\Delta}{\epsilon T}
+\frac{\epsilon L}{2}+\frac{\eta^2 L}{2T\epsilon}\left(\ln\left(\frac{\gamma^2}{v_0^2}T\right)+1\right)
+\frac{\kappa\sqrt{r}}{T}\left(\sum_{t=1}^T \frac{1}{\sqrt{b_t}}\right)+\frac{\kappa\eta}{T\epsilon}\sqrt{\sum_{t=1}^T \frac{r}{b_t}}\sqrt{\ln\left(\frac{\gamma^2}{v_0^2}T\right)+1}.
\end{align*}

In particular, if the batch size $b_t=\sqrt{t}$, $\epsilon = T^{-\frac{1}{2}}$ and $\eta=T^{-q}$ for 
arbitrary $q>0,$ then
\begin{align*}
\frac{1}{T}\sum_{t=1}^T\expect\left[\norm{\Grad\L(\Theta_{t-1})}_*\right]
\le & \frac{\Delta}{\sqrt{T}}+\frac{L}{2\sqrt{T}}+\frac{4\kappa\sqrt{r}}{3T^{\frac{1}{4}}}+\frac{\sqrt{2r}\kappa}{T^{\frac{1}{4}+q}}
\sqrt{\ln\left(\frac{\gamma^2}{v_0^2}T\right)+1}+\frac{L}{2T^{\frac{1}{2}+2q}}\left(\ln\left(\frac{\gamma^2}{v_0^2}T\right)+1\right)\\\nonumber
= & \bigO\left(\frac{\kappa\sqrt{r}}{T^{\frac{1}{4}}}+\frac{L+\Delta}{\sqrt{T}}\right)
\end{align*}
for large $T$.
Alternatively, if the batch size $b_t=t$, $\epsilon = T^{-\frac{1}{2}}$ and $\eta=T^{-q}$ for 
arbitrary $q>0,$ then
\begin{align*}
\frac{1}{T}\sum_{t=1}^T\expect\left[\norm{\Grad\L(\Theta_{t-1})}_*\right]
\le & \frac{\Delta}{\sqrt{T}}+\frac{L}{2\sqrt{T}}+\frac{2\kappa\sqrt{r}}{\sqrt{T}}+\frac{\kappa\sqrt{r}}{T^{\frac{1}{2}+2q}}
\left(\ln\left(\frac{\gamma^2}{v_0^2}T\right)+1\right)+\frac{L}{2T^{\frac{1}{2}+2q}}\left(\ln\left(\frac{\gamma^2}{v_0^2}T\right)+1\right)\\\nonumber
= & \bigO\left(\frac{\kappa\sqrt{r}+L+\Delta}{\sqrt{T}}\right)
\end{align*}
The proof is thus completed.
\end{proof}

\end{document}